\renewcommand{\vec}{\text{vec}}
\newtheorem{assump}{Assumption}
\newtheorem{thm}{Theorem}
\newtheorem{lem}{Lemma}
\newtheorem{defi}{Definition}
\DeclareMathOperator*{\argmin}{arg\,min}
\DeclareMathOperator*{\argmax}{arg\,max}
\title{\bf Matrix Completion using Kronecker Product Approximation}
\author{Chencheng Cai$^a$, Rong Chen$^b$ and Han Xiao$^b$\footnote{Chencheng Cai is a postdoctoral         fellow 
    at Department of Statistical Science, Fox School of Business, Temple University,  Philadelphia, PA 19122. E-mail:
    chencheng.cai@temple.edu.
Rong Chen is Professor, Department of
    Statistics, Rutgers University, Piscataway, NJ 08854. E-mail:
    rongchen@stat.rutgers.edu. Han Xiao is Associate Professor,
    Department of Statistics, Rutgers
    University, Piscataway, NJ 08854. E-mail:
    hxiao@stat.rutgers.edu.
Han Xiao is the
    corresponding author. Chen's research is supported
in part by National Science Foundation
grants DMS-1503409, DMS-1737857, IIS-1741390 and
CCF-1934924. Xiao's research is supported in part by National Science Foundation grants DMS-1454817, ATD-2027855, and a research grant from NEC Labs America.
}
\\ \vspace{0.2cm} {$^a$Temple University, $^b$Rutgers University}\\}
\date{}
\begin{document}
\maketitle

\begin{abstract}
A matrix completion problem is to recover the missing entries in a partially observed matrix. Most of the existing matrix completion methods assume a low rank structure of the underlying complete matrix. In this paper, we introduce an alternative and more general form of the underlying complete matrix, which assumes a low Kronecker rank instead of a low regular rank, but includes the latter as a special case. The extra flexibility allows for a much more parsimonious representation of the underlying matrix, but also raises the challenge of determining the proper Kronecker product configuration to be used. We find that the configuration can be identified using the mean squared error criterion as well as a modified cross-validation criterion. We establish the consistency of this procedure under suitable conditions on the signal-to-noise ratio. A aggregation procedure is also proposed to deal with special missing patterns and complex underlying structures. Both numerical and empirical studies are carried out to demonstrate the performance of the new method. 
\end{abstract}

{\bf Keywords:} Matrix completion, Kronecker product approximation, aggregated estimation, model selection, information ceriterion

\section{Introduction}
Many applications involve observations in a matrix form, and most likely of large dimensions. 
When the observed matrix is the sum of a signal matrix and a noise matrix, or is partially observed, 
a common approach in machine learning and statistics is to assume that the underlying signal matrix has a rank that is much smaller than its dimension. The low rank structure represents the interaction between matrix entries with a smaller number of parameters and reveals the core factors that drive and control the high dimensional observations, resulting in significant dimension reduction. Such a low rank assumption also makes it possible to recover the missing entries in a partially observed matrix, known as the {\it matrix completion problem}. Matrix completion has broad and important applications, 
including collaborative filtering \citep{goldberg1992using}, global positioning \citep{biswas2006semidefinite} and remote sensing \citep{schmidt1986multiple}, among many others. One of the most famous examples is the Netflix recommendation system contest \citep{bennett2007netflix}, in which the winning algorithm recovers the movie-rating matrix by a rank one matrix based on the observed ratings. 

Two different settings of matrix completion problems have been considered in the literature. One is the exact matrix completion problem whose goal is to recover the original matrix \emph{exactly} when a portion of the matrix entries is missing. When the original matrix rank is known, it can be recovered through the alternating minimization algorithm proposed by \citet{jain2013low} under certain conditions. When the matrix rank is unknown, it is still possible to exactly recover the matrix through nuclear norm optimization \citep{candes2009exact, candes2010power}. The nuclear norm optimization approach can also be applied to tensor completion problems whose goal is to recover a tensor structure \citep{yuan2016tensor}. The second setting considers the circumstances when the observed entries are corrupted by noises while at the same time a portion of
the entries is missing. It is known as the stable matrix completion problem.
\citet{candes2010matrix} extends the nuclear norm optimization approach to the stable matrix completion problem by relaxing the constraint.
Assuming the matrix rank is known, \citet{keshavan2010matrix} approaches the problem using a combination of spectral techniques and manifold optimization. Specifically for stable rank one matrix completion problem, \citet{cosse2017stable} proposes to solve it using two rounds of semi-definite programming relaxation. Note that the alternating minimization algorithm in \citet{jain2013low} is applicable for the stable matrix completion problem as well.

It is observed that in many applications of image analysis, signal processing and quantum computing, the high dimensional data in matrix form often has a low-rank structure in terms of Kronecker product decomposition instead of singular value decomposition \citep{Werner2008On, Duarte2012Kronecker, Kamm1998Kronecker}.
Approximating a matrix with a sum of a small number of matrices in Kronecker
product form is an extension of the low rank approximation with a sum of rank one
matrices. The flexibility provides an alternative approach for matrix
completion. The key challenging factor of the approach is to
determine 
the Kronecker product's configuration, which is the dimensions of the two matrices of the product.

In this article, we consider the matrix completion problem under the setting that the signal matrix is the sum of $k$ Kronecker products with an unknown configuration. Although it is natural to think of using an information criterion to determine the configuration, we find strikingly that the criterion based on just the mean squared error can identify the configuration, and prove its asymptotic consistency under suitable conditions on the signal-to-noise ratio. We therefore propose a two-step procedure of the stable matrix completion problem, first identify the configuration, and then complete the matrix using the chosen configuration.


For a given configuration of the Kronecker product, if one block of the matrix is completely missing, then it is impossible to recover the entries in it. To alleviate this issue, we also introduce an aggregated procedure which incorporates a few different configurations, and combine the recovered matrices under each of them. This is based on a fairly simple observations that entries non-recoverable under one configuration may be recovered under others, maybe imperfectly but still useful. The aggregation over multiple configurations can also potentially provide more robust and stable estimates of the underlying signal matrix in finite samples under the bias-variance tradeoff framework. We propose an empirical procedure to implement the aggregation procedure, and use a modified cross-validation criterion to select the number of configurations to be used.

The rest of the paper is organized as follows. In Section \ref{sec:matrix-completion-problem}, we formally introduce the matrix completion problem, assuming the underlying complete matrix has a low Kronecker rank. The estimation and configuration determination procedures are presented in Section \ref{sec:method}. The theoretical analysis on the consistency of the configuration selection and the error bound of recovered matrix are provided in Section~\ref{sec:analysis}. We explore the power of aggregation by incorporating different configurations in Section~\ref{sec:averaging}. Section \ref{sec:example} employs simulation studies and a real image example to demonstrate the performances of both the configuration selection and the matrix completion procedures. Section \ref{sec:conclusion} concludes.

\textbf{Notation:} $[K]=\{0,1,\dots, K\}$ denotes the set of non-negative integers less than or equal to $K$. For a vector $u$, $\|u\|$ denotes its Euclidean norm. For a matrix $\bm M\in\mathbb R^{m\times n}$, $\|\bm M\|_F$ represents its Frobenius norm such that $\|\bm M\|_F = \sqrt{\text{tr}(\bm M\bm M')}$ and $\|\bm M\|_S$ denotes its spectral norm defined by $\|\bm M\|_S=\argmax_{u\in \mathbb R^n, \|u\|=1} \|\bm Mu\|$. For positive integer $P$, we denote the set of all divisors of $P$ by $d(P) = \{p\in \mathbb Z^+: P\mod p =0\}$.

\section{Low K-rank Matrix Completion}
\label{sec:matrix-completion-problem}

\subsection{K-rank and the matrix completion}
Let $\bm M\in\mathbb R^{P\times Q}$ be a $P\times Q$ matrix, and suppose
$P$ and $Q$ can be factorized as $P=pp^*$ and $Q=qq^*$. It is shown
\citep{van1993approximation} that $\bm M$ has a
complete Kronecker product decomposition (KPD) in the form
\begin{equation} \label{KPD}
\bm X=\sum_{i=1}^{(pq)\wedge (p^*q^*)}\lambda_i\bm A_i\otimes \bm B_i    
\end{equation}
where $\bm A_i\in\mathbb R^{p\times q}$ and
$\bm B_i\in\mathbb R^{p^*\times q^*}$, and the
Kronecker product of $\bm A\in\mathbb R^{p\times q}$ and
$\bm B\in\mathbb R^{p^*\times q^*}$ is given as
\begin{equation}
\bm A\otimes \bm B :=\begin{bmatrix}
a_{1,1}\bm B & a_{1,2}\bm B &\dots & a_{1,q}\bm B\\
a_{2,1}\bm B & a_{2,2}\bm B &\dots & a_{2,q}\bm B\\
\vdots &\vdots & \ddots &\vdots\\
a_{p,1}\bm B & a_{p,2}\bm B &\dots & a_{p,q}\bm B
\end{bmatrix},\label{eq:kronecker-product}
\end{equation}
where $a_{i, j}$ is the element of $\bm A$ in $i$-th row and $j$-th column.
In \eqref{KPD} we require
\[
\lambda_1 \geqslant \lambda_2 \geqslant \dots \geqslant \lambda_{(pq)\wedge (p^*q^*)} \geqslant 0,
\]
and
\begin{equation*}
    \mathrm{tr}(\bm A_i\bm A_j^T)=\mathrm{tr}(\bm B_i\bm B_j^T)=\left\{\begin{array}{ll}
    1     & \hbox{when } i=j \\
    0     & \hbox{when } i\neq j
    \end{array}\right.,
\end{equation*}
so that the matrices $\bm A_i$ and $\bm B_i$ are identified up to a sign change when $\lambda_i$ are all distinct. The $(p, q, p^*, q^*)$ or simply $(p,q)$ (when $P$ and $Q$ are given) is called the \textit{configuration} of the Kronecker
product decomposition. When $P$ and $Q$ have multiple integer factors, there are
multiple configurations and therefore a KPD in the form \eqref{KPD} exists for each configuration.

We propose to consider the matrix completion problem when the matrix $\bm X \in \mathbb{R}^{P\times Q}$, which we try to recover, has a low rank KPD such as
\begin{equation} \label{rank_KPD}
\bm X=\sum_{i=1}^{r}\lambda_i\bm A_i\otimes \bm B_i
\end{equation}
with configuration $(p_0, q_0)$ and rank $r\leq (p_0q_0)\wedge (p_0^*q_0^*)$. We refer to $r$ as the Kronecker rank (K-rank) of $\bm X$ with respect to the configuration $(p_0,q_0)$. The relationship between K-rank and KPD is similar to, and an extension of the relationship between rank and singular value decomposition (SVD), which renders $\bm X$ in the form
\begin{equation} \label{rank_SVD}
\bm X=\sum_{i=1}^{r}\lambda_iu_i v_i^T,
\end{equation}
a sum of $r$ rank one matrices. It is seen that the SVD in \eqref{rank_SVD} is the KPD in \eqref{rank_KPD} with the configuration $(P,1,1,Q)$. In fact,
there is a deeper and more intrinsic connection between \eqref{rank_KPD} and \eqref{rank_SVD},
as will be explained in Section~\ref{sec:connection}.

We assume that
  $\bm Y$ is a corrupted version of the underlying signal matrix $\bm X$ such that
\begin{equation}
\bm Y = \bm X +\dfrac{\sigma}{\sqrt{PQ}} \bm E,\label{eq:kronecker-model}
\end{equation}
where $\bm E$ is a $P\times Q$ matrix with IID standard Gaussian entries and $\sigma$ denotes the noise level. 

In the matrix completion problem, the matrix $\bm Y$ is only partially observed. In this paper, we consider the missing completely at random scheme by assuming that the observed matrix $\bm Y^*$ is generated as
\[
  [\bm Y^*]_{ij}=[\bm Y]_{ij}\delta_{ij}
\]
where $\delta_{ij}$ are independent and identically distributed $\sim$ Bernoulli$(\tau)$, which are also independent with $\bm X$ and $\bm E$. The rate $\tau$ is called the \textit{observing rate}. 
We also introduce another notation of the observed matrix through a projection operator. Let $\Omega$ be the set of indices of the observed entries in $\bm Y^*$ and define the projection $P_\Omega(\cdot)$ as
$$[P_\Omega(\bm M)]_{ij} = \begin{cases}
[\bm M]_{ij}& \text{if } (i, j)\in\Omega,\\
0&\text{otherwise},
\end{cases}$$
for any $P\times Q$ matrix $M$. In particular, it follows that $\bm Y^*=P_\Omega(\bm Y)$, where unobserved entries are filled with zeros by default.


\subsection{Connection to low rank matrix completion}\label{sec:connection}
The prevailing matrix completion algorithms \citep{candes2009exact, candes2010matrix, jain2013low}  assume a
low rank structure of the signal matrix $\bm X$ as in \eqref{rank_SVD}. It corresponds to a special low K-rank form of $\bm X$ in \eqref{rank_KPD}, with the configuration $p=P, q=1$ and $p^*=1, q^*=Q$. 
In this section, we demonstrate the idea of \citet{van1993approximation} and \citet{cai2019kronecker} that with a \textit{known} configuration, the matrix completion problem with KPD in \eqref{rank_KPD} can be converted to a standard low rank matrix completion problem and be solved using the existing approaches. 

As discussed in \citet{van1993approximation} and \citet{cai2019kronecker}, the Kronecker product of two matrices and the outer product of their vectorized version are linked through a rearrangement operation. It can be seen from
(\ref{eq:kronecker-product}) that all the entries in $\bm A\otimes \bm B$ have
the form $a_{i,j}b_{k,\ell}$ and all the entries in $\vec(\bm A)\vec(\bm B)^T$ have
the same form $a_{i,j}b_{k,\ell}$, where
$\vec(\cdot)$ is the vectorization operation that flattens a matrix to a column vector. Hence
$\bm A\otimes \bm B$ and $\vec(\bm A)\vec(\bm B)^T$ have exactly the same
entries
in the matrices, except the arrangement -- one is a $pp^*\times qq^*$ matrix
and the other is a $pq\times p^*q^*$ matrix. 

We define a rearrangement
operator $\mathcal R_{p, q}[\cdot]$ so that
$\mathcal R[\bm A\otimes \bm B]=\vec(\bm A)\vec(\bm B)^T$.
Specifically, assuming $\bm M$ is a $pp^*\times qq^*$ matrix, a rearrangement operation $\mathcal R_{p,q}[\cdot]$ is defined as
$$\mathcal R_{p,q}[\bm M]=[\vec(\bm M_{1,1}),\dots, \vec(\bm M_{p, q})]^T,$$
where $\bm M_{i, j}$ is the $(i, j)$-th
block of $\bm M$ of size $p^*\times q^*$. Then
for any $p\times q$ matrix $\bm A$ and $p^*\times q^*$ matrix $\bm B$,
we have
$$\mathcal R_{p,q}[\bm A\otimes \bm B] = \vec(\bm A)[\vec(\bm B)]^T.$$
That is, if the configuration of the rearrangement operator is the same as that of the operand,
the rearrangement operator $\mathcal R_{p,q}$ turns a Kronecker product into a rank-one matrix. Note that
the rearrangement operation is linear, in that $\mathcal R_{p,q}(c_1\bm M_1+c_2\bm M_2)=
c_1\mathcal R_{p,q}(\bm M_1)+c_2\mathcal R_{p,q}(\bm M_2)$ and the inverse operator
$\mathcal R_{p,q}^{-1}$ exist so that $\mathcal R_{p,q}^{-1}(\mathcal R_{p,q}(\bm M))=\bm M$. The 
rearrangement operator also retains the Frobenius norm in that 
$\|\mathcal R_{p,q}(\bm M)\|_F=\|\bm M\|_F$ since both matrices contain the same set of elements.
As a result, the KPD \eqref{rank_KPD} becomes a SVD after the rearrangement,
and many problems pertaining to KPD can therefore be solved under the realm of SVD. 

If we apply $\mathcal R_{p_0, q_0}$ to the corrupted matrix $\bm Y$ in \eqref{eq:kronecker-model}, assuming that $\bm X$ has a low K-rank $r\leq (p_0q_0)\vee(p_0^*q_0^*)$, it returns a corrupted low rank matrix
\begin{equation}
\mathcal R_{p_0, q_0} [\bm Y] = \sum_{i=1}^r \lambda_i\vec(\bm A_i) [\vec(\bm B_i)]^T + \dfrac{\sigma}{\sqrt{PQ}} \mathcal R_{p_0, q_0}[\bm E],\label{eq:equivalent-low-rank}
\end{equation}
where $\mathcal R_{p_0, q_0}[\bm E]$, the rearranged noise matrix, is still a matrix of IID Gaussian entries. Note that $\mathcal R_{p_0, q_0}[P_\Omega \bm Y] = P_{\bar\Omega_{p_0, q_0}}\mathcal R_{p_0, q_0}[\bm Y]$ where $\bar\Omega_{p_0, q_0}$ denotes the indices of the observed entries after rearrangement. The K-rank-$r$ matrix completion problem based on $P_\Omega \bm Y$ is therefore equivalent to the standard rank-$r$ matrix completion problem based on $P_{\bar\Omega_{p_0, q_0}}\mathcal R_{p_0, q_0}[\bm Y]$. 

The preceding discussion assumes that the true configuration $(p_0, q_0)$ is known so that the rearrangement operator in \eqref{eq:equivalent-low-rank} is determined. However, observing only the matrix $P_\Omega\bm Y$ and its dimension $(P, Q)$ does not reveal the true configuration $(p_0, q_0)$ --- any configuration $(p, q)$ satisfying $p\in d(P)$ and $q\in d(Q)$ might be the true configuration, where $d(P)$ is the set of factors (divisors) of $P$. If the rearrangement operator $\mathcal R$ is erroneously configured, the low rank structure in \eqref{eq:equivalent-low-rank} might be violated since $\mathcal R_{p, q}[\bm A\otimes \bm B]$ typically does not preserve the low rank structure if $(p, q)\neq(p_0, q_0)$. 



\section{Methodology}\label{sec:method}
In this section, we propose a two-step procedure to solve the matrix completion problem under the assumption
that the signal matrix $\bm X$ is of low K-rank under a proper configuration. In the first step, we determine the configuration by maximizing a criterion function over all candidate configurations. Specifically, we use the spectral norm of the rearranged version of $P_\Omega[\bm Y]$ as the criterion function. In the second step, we convert the observed matrix to a standard matrix completion problem as in \eqref{eq:equivalent-low-rank} with respect to the estimated configuration from first step. The rearranged partially observed matrix is then completed by an alternating least squares (ALS) algorithm.


\subsection{Configuration Determination}\label{sec:method-config}
Recall that $(P,Q)$ is
the dimension of $\bm Y^*=P_\Omega\bm Y$.
We introduce the following candidate set
\begin{equation}
\label{eq:candidate}
    \mathcal C_\delta =\{(p, q): p\in d(P), q\in d(Q), (PQ)^{1/4+\delta}\leqslant pq\leqslant (PQ)^{3/4-\delta}\},
\end{equation}
where $0<\delta<1/4$ is a positive constant. Again, $d(P)$ is the set of all factors of $P$. For each configuration $(p,q)$ in the set $\mathcal C_\delta$, the rearranged $\mathcal R_{p,q}[P_\Omega \bm Y]$ has an aspect ratio between $(PQ)^{-1/2+2\delta}$ and $(PQ)^{1/2-2\delta}$. Here, extreme aspect ratios are excluded for several reasons. First, the extreme ``corner" cases where $pq$ is extremely small or large are of less interest. For example $p=q=1$ and $p=P, q=Q$ represent the case of a product of scalar and the full matrix. 
When $p=q=2$, the signal matrix $\bm X$ is assumed to be a block matrix formed by two-by-two matrices (total $PQ/4$
of them) of the
same proportion, while when $p=P/2, q=Q/2$, the matrix $X$ is a two-by-two block matrix of size 
$(P/2)\times (Q/2)$. The model complexity is high and the
dimension reduction in these cases is limited, similar to using a very small bandwidth in kernel smoothing operations.
Second, from the view of model selection using information criteria, we can view $\delta$ as a model complexity penalty parameter with the penalty function
$pen(p,q)=0$ if $(p,q)\in \mathcal C_\delta$ and $pen(p,q)=\infty$ otherwise. Third, we expect the partially observed matrix $\mathcal R_{p_0, q_0}[P_\Omega \bm Y]$ to be recoverable under the true configuration while an extreme aspect ratio usually results in a non-negligible probability of missing an entire row or column, in which case these missing items are not recoverable. The 
recoverability issue will be discussed in detail later. Lastly, the aspect ratio is a crucial factor in quantifying the error of standard low rank matrix completion \citep{keshavan2010matrixcompletion, jain2013low, gunasekar2013noisy}. Theoretical performance guarantee exists only when the aspect ratio is in a reasonable range.

We propose to determine the configuration through the following maximization problem,
\begin{equation}
    (\hat p, \hat q) = \argmax_{(p, q)\in\mathcal C_\delta}\ \|\mathcal R_{p, q}[P_\Omega \bm Y]\|_S.\label{eq:config-maximization}
\end{equation}
Recall that in $P_\Omega \bm Y$, unobserved entries are filled in with zeros.

We now briefly discuss the heuristics behind the procedure \eqref{eq:config-maximization}. The rearranged partially observed matrix $\mathcal R_{p, q}[P_\Omega \bm Y]$ can be viewed as a partially observed version of the rearranged matrix $P_{\bar\Omega_{p, q}} \mathcal R_{p, q}[\bm Y]$. Its spectral norm is therefore
\begin{equation}
\|\mathcal R_{p, q}[P_\Omega \bm Y]\|_S=\|P_{\bar\Omega_{p, q}} \mathcal R_{p, q}[\bm Y]\|_S\approx \tau \|\mathcal R_{p, q}[\bm X]\|_S.\label{eq:spectral-approx}
\end{equation}
As discussed in \citet{cai2019kronecker}, under mild conditions, $\|\mathcal R_{p, q}[\bm X]\|_S$ attains its maximum over $\mathcal C_\delta$ at the true configuration. The error in the approximation \eqref{eq:spectral-approx} comes from two sources: the noise matrix $\bm E$ and the missing entries.
Both can be controlled when we operate within the candidate set $\mathcal C_\delta$ for $\delta>0$. The analysis is presented in Section~\ref{sec:analysis}.

\subsection{Estimation}\label{sec:method-estimation}

Once the configuration is determined, the Kronecker matrix completion problem under the 
configuration $(\hat p, \hat q)$ becomes a standard low rank matrix completion problem on the 
rearranged matrix and the its rank is the same as the K-rank of the original matrix. Existing procedures such as that proposed by \cite{ashraphijuo2017rank} for identifying the rank are readily applicable. With a fixed

\begin{align}
  \min_{\lambda_i, \bm A_i, \bm B_i}\  \left\|P_{\Omega}\left(\sum_{i=1}^r\lambda_i\bm A_i\otimes \bm B_i\right) - P_{\Omega}\bm Y\right\|_F. \label{eq:optimization}
\end{align}
where $\bm A_i\in\mathbb R^{\hat p\times \hat q}$, $\bm B_i\in\mathbb R^{\hat p^*\times \hat q^*}$ and $\|\bm A_i\|_F = \|\bm B_i\|_F=1$ with $\hat p^*=P/\hat p$ and $\hat q^*=Q/\hat q$. 
Since the rearrangement operation preserves the Frobenius norm, the optimization problem \eqref{eq:optimization} is equivalent to the classical rank-$r$ matrix completion problem
\begin{equation}
      \min_{\lambda_i, u_i, v_i}
        \left\|P_{\bar\Omega_{\hat p,\hat q}}\left(\displaystyle\sum_{i=1}^r\lambda_i u_iv_i^T-\mathcal R_{\hat p,\hat q}[\bm Y]\right)
        \right\|_F.\label{eq:optimization-rearranged}
\end{equation}
where $u_i=\vec(\bm A_i)$, $v_i = \vec(\bm B_i)$, and $\bar\Omega_{\hat p, \hat q}$ records the indices of observed entries after the rearrangement.
To solve the optimization in (\ref{eq:optimization-rearranged}), we adopt the alternating least squares (ALS) algorithm proposed by \citet{jain2013low}, where the initial values for $u_i$ and $v_i$ are directly estimated from the singular value decomposition of $P_{\bar\Omega_{\hat p, \hat q}}\mathcal R_{\hat p,\hat q}[\bm Y]$ as
in \citet{keshavan2010matrix}. The algorithm is depicted in Algorithm \ref{alg:alternating-minimization}. The recovered matrix is therefore $\hat{\bm X}=\mathcal R^{-1}_{\hat p,\hat q}[\sum_{i=1}^r\hat\lambda_i \hat u_i \hat v_i^T]$, where $\mathcal R_{\hat p,\hat q}^{-1}$ is the inverse operator of the rearrangement and $\hat \lambda_i$, $\hat u_i$ and $\hat v_i$ are the optimal solution of \eqref{eq:optimization-rearranged}.

\begin{algorithm}[!tb]
    \caption{Matrix Completion with Fixed Configuration}
    \label{alg:alternating-minimization}
    \begin{algorithmic}[1]
        \STATE {\bfseries Input:} matrix $P_\Omega\bm Y$, configuration $(p, q)$.
        \STATE Select rank $r$. 
        \STATE Let $\tilde{\bm Y}=P_{\bar\Omega_{p, q}}\mathcal R_{p, q}[\bm Y]$.
        \STATE Initialize $\bm\Lambda^{(0)}$, $\bm U^{(0)}$, $\bm V^{(0)}$ such that
        $\bm U^{(0)}\bm \Lambda^{(0)}[\bm V^{(0)}]^T$ is the leading rank $r$ SVD of $\tilde{\bm Y}$.
        \REPEAT
        \STATE $\bm U^{*}=\displaystyle\argmin_{\bm U}\ \|P_{\bar\Omega_{p, q}}[\tilde{\bm Y} - \bm U[\bm V^{(k)}]^T]\|_F$
        \STATE $\bm V^{*}=\displaystyle\argmin_{\bm V}\ \|P_{\bar\Omega_{p, q}}[\tilde{\bm Y} - \bm U^{*}[\bm V]^T]\|_F$
        \STATE Update $\bm \Lambda^{(k+1)}$, $\bm U^{(k+1)}$, $\bm V^{(k+1)}$ such that $\bm U^{*}[\bm V^{*}]^T=\bm U^{(k+1)}\bm\Lambda^{(k+1)}[\bm V^{(k+1)}]^T$ is in standard SVD form.
        \UNTIL{convergence}
        \STATE    Return $\hat{\bm X}=\mathcal R^{-1}_{p,q}[\bm U\bm\Lambda\bm V^T]$ where $\bm \Lambda, \bm U, \bm V$ are from the last iteration.
    \end{algorithmic}
\end{algorithm}

\section{Theoretical Analysis}\label{sec:analysis}


In this section, we present the theoretical properties of our methodology when the signal $\bm X$ is of
K-rank 1. That is, 
\begin{equation}
\label{eq:krank_1}
\bm Y = \lambda \bm A\otimes \bm B + \dfrac{\sigma}{\sqrt{PQ}} \bm E,
\end{equation}
where $\bm A\in\mathbb R^{p_0\times ,q_0}$, $\bm B\in\mathbb R^{p_0^*\times q_0^*}$,  $\|\bm A\|_F=\|\bm B\|_F=1$ and $\bm E$ is a Gaussian ensemble matrix with IID standard normal entries. 
Here we normalize the error matrix by $(PQ)^{-1/2}$ such that the signal-to-noise ratio of the fully observed $\bm Y$ is $\lambda^2/\sigma^2$. 
The analysis in this section can be extended to $r>1$ cases without substantial differences. 
The presentation for $r=1$ case is cleaner and easier to understand.

\subsection{Assumptions}
In the subsequent analysis, we consider the following asymptotic scheme.
\begin{assump}[{\bf True configuration}]\label{assump:true-config}
Assume the dimension of $\bm Y$ goes to infinity, i.e. $PQ\rightarrow \infty$,
and there is an absolute constant $0<\delta < 1/4$ such that the true configuration $(p_0, q_0)$ satisfies
$$(PQ)^{1/4+\delta} \leqslant p_0q_0 \leqslant (PQ)^{3/4-\delta}.$$
\end{assump}
The requirement on the true configuration in Assumption~\ref{assump:true-config} can be equivalently stated as $(p_0, q_0)\in\mathcal C_\delta$. Note that we do not require both $P\rightarrow \infty$ and $Q\rightarrow \infty$. We only require that jointly $PQ\rightarrow \infty$. In addition, $\delta>0$ implies that the dimensions $p_0q_0\rightarrow \infty$ and $p_0^*q_0^*\rightarrow \infty$, or for both component matrices $\bm A$ and 
$\bm B$, at least one of its dimensions goes to infinity. In the meantime, as discussed in Section~\ref{sec:method-config}, the aspect ratio of $\mathcal R_{p_0, q_0}[\bm Y]$ is restricted between $(PQ)^{-1/2+2\delta}$ and $(PQ)^{1/2-2\delta}$. 

\begin{defi}[Incoherence Coefficient]\label{def:incoherence}
The incoherence parameter $\mu$ of a $m\times n$ matrix $M$ is defined by
$$\mu=\dfrac{\sqrt{mn}\|\bm M\|_{\max}}{\|\bm A\|_F},$$
where $\|\cdot\|_{\max}$ denotes the maximum absolute entry of a matrix.
\end{defi}

\begin{assump}[{\bf Incoherence condition}]\label{assump:incoherence}
Let $\mu$ be the maximum of the incoherence parameters of matrices $\bm A$ and $\bm B$ as defined in Definition~\ref{def:incoherence}.
We assume as $PQ\rightarrow\infty$
$$\lim_{PQ\rightarrow\infty}\  \dfrac{\mu^2}{(PQ)^\delta}=0.$$
\end{assump}

The incoherence condition is commonly imposed in theoretical investigations of matrix completion problems \citep{candes2009exact, candes2010matrix, keshavan2010matrix}.  More specifically, Assumption~\ref{assump:incoherence} translates into an incoherence condition on $\vec(\bm A)$ and $\vec(\bm B)$ , the singular vectors of the rearranged matrix, which is usually assumed for the standard low rank matrix completion problem on $\mathcal R_{p_0, q_0}[P_\Omega \bm Y]$. 

The incoherence parameter $\mu$ is greater or equal to $1$.
One would expect $\mu$ to increase slowly as $PQ\rightarrow\infty$. In the case when a matrix is generated by normalizing a $p\times q$ matrix of IID standard Gaussian entries, its incoherence parameter would be $O_p(\sqrt{\log pq})$ as $pq\rightarrow\infty$. Therefore, if both $\bm A$ and $\bm B$ are generated from random Gaussian matrices (see for example, the random scheme introduced in \citet{cai2019kronecker}), we have $\mu = O_p(\sqrt{\log (p_0q_0\vee p_0^* q_0^*)}) = O_p(\sqrt{(3/4-\delta)\log PQ})$ as $PQ\rightarrow\infty$. 

\begin{assump}[{\bf Observing rate}]\label{assump:observing-rate}
Assume the observing rate $\tau$ satisfies
$$\lim_{PQ\rightarrow\infty}\dfrac{\tau(PQ)^{2\delta}}{(\mu^4+\log PQ)}\rightarrow \infty.$$
\end{assump}
Assumption~\ref{assump:observing-rate} specifies the minimum allowable observing rate $\tau$.
Under Assumption 2, the observing rate $\tau$ is allowed to go to zero, with a decay rate depends on $\delta$ and $\mu$.

Following \citet{cai2019kronecker}, we define 
$$\phi = \max_{(p, q)\in\mathcal W_\delta} \|\mathcal R_{p, q}[\bm A\otimes \bm B]\|_S,$$
where $\mathcal W_\delta := \mathcal C_\delta \setminus \{(p_0, q_0)\}$ is set of incorrect configurations in the candidate set. Since under the true configuration, $\|\mathcal R_{p_0, q_0}[\bm A\otimes \bm B]\|_S=\|\vec(\bm A)\vec(\bm B)^T\|_S=\|\vec(\bm A)\|_F\|\vec(\bm B)^T\|_F=1$, $\phi^2$ can be viewed as the maximum portion (in Frobenius norm) of the matrix $\bm A\otimes \bm B$ that can be represented by a Kronecker product of a wrong configuration. 
On the other hand, for any given configuration $(p, q)\in\mathcal W_\delta$, we always have $\|\mathcal R_{p,q}[\bm A\otimes \bm B]\|_S\leqslant \|\mathcal R_{p,q}[\bm A\otimes \bm B]\|_F=\|\bm A\otimes \bm B\|_F=1$, resulting in $\phi\leqslant 1$.
The \textit{representation gap} $\psi^2$ is therefore defined as
\begin{equation}
\psi^2:= 1- \phi^2.\label{eq:representation-gap}
\end{equation}
The representation gap $\psi^2$, taking value on $[0, 1]$, quantifies the gap between any wrong configuration and the true configuration in the sense of representing the signal part with a K-rank 1 matrix.
If $\psi^2=0$, there exists another configuration $(p', q')\neq (p_0, q_0)$ such that $\bm A\otimes \bm B = \bm A'\otimes \bm B'$ for some $\bm A'\in\mathbb R^{p'\times q'}$ and $\bm B'\in\mathbb R^{P/p'\times Q/q'}$ , hence our model cannot be uniquely determined. A larger value of the representation gap $\psi^2$ makes it easier to separate the true configuration from the wrong ones. As discussed in 
\cite{cai2019kronecker}, this gap is similar to the smallest eigenvalue in determining matrix ranks, or the 
smallest standardized non-zero coefficient in variable selection in regression analysis.

\begin{assump}[SNR and representation gap]\label{assump:snr}
Assume $\psi^2$ and $\lambda/\sigma$ satisfy
\begin{align*}
\lim_{PQ\rightarrow\infty} &\dfrac{\tau\psi^4(PQ)^{2\delta}}{\mu^4}\rightarrow\infty,\mbox{\ \ and \ \ } 
\lim_{PQ\rightarrow\infty} \dfrac{\tau(\lambda/\sigma)^2 \psi^4(PQ)^{2\delta}}{{\log PQ}}\rightarrow\infty.
\end{align*}
\end{assump}
Assumption~\ref{assump:snr} gives the lower bounds on the representation gap $\psi^2$ and the signal-to-noise ratio $\lambda/\sigma$, in terms of the incoherence parameter $\mu$ and the observing rate $\tau$. Comparing with Assumption~\ref{assump:observing-rate} on the observing rate $\tau$, we see that constant values of $\lambda/\sigma$ and $\psi^2$ suffice for Assumption~\ref{assump:snr}, though they may shrink to 0 with a rate depending on $\delta$ and $\tau$, as $PQ\rightarrow \infty$.

\subsection{Properties of the estimated configuration}
To ease the notation, we denote the criterion function by
$$G(p, q) = \|\mathcal R_{p, q}[P_\Omega\bm Y]\|_S,$$
for $(p, q) \in \mathcal C_\delta$. 
Recall that we use this function as the configuration selection criterion in \eqref{eq:config-maximization}. Theorem~\ref{thm:gap} provides the gap in the expectation of $G(p, q)$ between the true configuration and any wrong configurations. 

\begin{thm}[{\bf Gap in criterion function}]\label{thm:gap}
Under Assumptions~\ref{assump:true-config} to 
\ref{assump:snr}, we have
$$\mathbb E[G(p_0, q_0)] - \max_{(p, q)\in\mathcal W_\delta} \mathbb E[G(p, q)] \geqslant \tau\lambda(1-\phi)(1+o(1)),$$
where the expectation is taken on both random observations of entries and the random noise.
\end{thm}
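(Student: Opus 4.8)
The plan is to decompose the rearranged observed matrix into its signal and noise parts, bound $G$ from below at the true configuration and from above at every wrong configuration, and show that the separation is governed by the signal spectral norms while all stochastic fluctuations are of lower order. Write $P_{\bar\Omega}$ for $P_{\bar\Omega_{p,q}}$ when the configuration is clear. For any $(p,q)\in\mathcal C_\delta$, linearity of $\mathcal R_{p,q}$ and the identity $\mathcal R_{p,q}[P_\Omega\bm Y]=P_{\bar\Omega_{p,q}}\mathcal R_{p,q}[\bm Y]$ give
$$\mathcal R_{p,q}[P_\Omega\bm Y] = P_{\bar\Omega_{p,q}}\left(\lambda\,\mathcal R_{p,q}[\bm A\otimes\bm B] + \frac{\sigma}{\sqrt{PQ}}\mathcal R_{p,q}[\bm E]\right).$$
Abbreviating $\bm S_{p,q}=\mathcal R_{p,q}[\bm A\otimes\bm B]$ and $\bm N_{p,q}=\mathcal R_{p,q}[\bm E]$ (the latter still has IID standard Gaussian entries), the triangle inequality for $\|\cdot\|_S$ yields
$$\big\|P_{\bar\Omega_{p,q}}[\lambda\bm S_{p,q}]\big\|_S - \tfrac{\sigma}{\sqrt{PQ}}\big\|P_{\bar\Omega_{p,q}}\bm N_{p,q}\big\|_S \le G(p,q) \le \big\|P_{\bar\Omega_{p,q}}[\lambda\bm S_{p,q}]\big\|_S + \tfrac{\sigma}{\sqrt{PQ}}\big\|P_{\bar\Omega_{p,q}}\bm N_{p,q}\big\|_S.$$

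Next I would estimate the masked signal term. At the true configuration $\bm S_{p_0,q_0}=\vec(\bm A)\vec(\bm B)^T$, so testing with the unit vectors $u=\vec(\bm A)$ and $v=\vec(\bm B)$ gives $\|P_{\bar\Omega}[\lambda\bm S_{p_0,q_0}]\|_S\ge \lambda\,u^T P_{\bar\Omega}[\bm S_{p_0,q_0}]v=\lambda\sum_{i,j}\delta_{ij}[\vec\bm A]_i^2[\vec\bm B]_j^2$, whose expectation is exactly $\tau\lambda$; this lower bound needs no concentration. For a wrong configuration I center the mask, $P_{\bar\Omega}[\lambda\bm S_{p,q}]=\tau\lambda\bm S_{p,q}+\lambda(P_{\bar\Omega}-\tau)[\bm S_{p,q}]$, so that $\mathbb E\|P_{\bar\Omega}[\lambda\bm S_{p,q}]\|_S\le \tau\lambda\|\bm S_{p,q}\|_S+\lambda\,\mathbb E\|(P_{\bar\Omega}-\tau)[\bm S_{p,q}]\|_S\le \tau\lambda\phi + (\text{masking fluctuation})$, using $\|\bm S_{p,q}\|_S\le\phi$ from the definition of $\phi$.

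It then remains to show that the two error contributions are $o(\tau\lambda(1-\phi))$ uniformly over $\mathcal C_\delta$. For the masking fluctuation I would apply a matrix Bernstein inequality to $\sum_{i,j}(\delta_{ij}-\tau)[\bm S_{p,q}]_{ij}e_if_j^T$: the incoherence assumption bounds the entries by $\|\bm S_{p,q}\|_{\max}=\|\bm A\|_{\max}\|\bm B\|_{\max}\le\mu^2/\sqrt{PQ}$, while each row or column sum of squared entries is at most $\mu^4/((pq)\wedge(p^*q^*))\le\mu^4/(PQ)^{1/4+\delta}$ by Assumption~\ref{assump:true-config}, so the bound decays like $\mu^2\sqrt{\tau\log PQ}\,(PQ)^{-1/8-\delta/2}$ up to a lower-order term. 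For the Gaussian term, $P_{\bar\Omega}\bm N_{p,q}$ has independent, mean-zero, variance-$\tau$ entries, so a standard spectral-norm bound for random matrices gives $\frac{\sigma}{\sqrt{PQ}}\mathbb E\|P_{\bar\Omega}\bm N_{p,q}\|_S\lesssim \frac{\sigma\sqrt\tau}{\sqrt{PQ}}(\sqrt{pq}+\sqrt{p^*q^*})\lesssim \sigma\sqrt\tau\,(PQ)^{-1/8-\delta/2}$, where the range in $\mathcal C_\delta$ controls $\sqrt{pq}+\sqrt{p^*q^*}\lesssim(PQ)^{3/8-\delta/2}$. Combining the bounds gives $\mathbb E[G(p_0,q_0)]-\max_{\mathcal W_\delta}\mathbb E[G(p,q)]\ge\tau\lambda(1-\phi)-(\text{errors})$; dividing the error estimates by $\tau\lambda(1-\phi)\ge\tau\lambda\psi^2/2$ and invoking the first limit of Assumption~\ref{assump:snr} for the incoherence term and the second for the noise term shows both ratios vanish, which yields the claimed factor $(1+o(1))$.

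The main obstacle is the uniform control of the centered masking fluctuation: unlike the clean test-vector lower bound at the true configuration, it requires a matrix concentration argument whose variance proxy must be routed entirely through the incoherence parameter, and one must check that the resulting estimate survives division by the possibly vanishing factor $\tau\lambda\psi^2$. A convenient feature that keeps this tractable is that the statement compares a \emph{maximum of expectations} rather than an expectation of a maximum, so each $\mathbb E[G(p,q)]$ is bounded separately and the maxima of the (configuration-uniform) right-hand sides are taken; no union bound over the configurations in $\mathcal W_\delta$ is needed.
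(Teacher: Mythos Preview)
Your argument is correct and takes a genuinely different route from the paper. The paper controls both the masked signal and the masked noise through a single black-box estimate of \citet{keshavan2010matrixcompletion} (stated there as Lemma~\ref{lem:restate-bound}), which compares $\|P_\Omega\bm M\|_S$ with $\tau\|\bm M\|_S$ in terms of $\|\bm M\|_{\max}$, and requires first conditioning on the event that the mask is not ``over-represented'' (Lemma~\ref{lem:over-representation}); the expectation is then obtained by integrating over that good event and its complement. You instead split the two contributions: a direct test-vector inequality gives $\mathbb E\|P_{\bar\Omega}[\lambda\bm S_{p_0,q_0}]\|_S\ge\tau\lambda$ at the true configuration with no concentration at all, matrix Bernstein handles $(P_{\bar\Omega}-\tau)[\bm S_{p,q}]$ at wrong configurations, and a Gaussian spectral bound handles the noise. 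What your approach buys is self-containment (no need for the Keshavan et al.\ lemma or the over-representation conditioning) and, incidentally, a sharper fluctuation bound: your signal and noise errors carry an extra factor $(PQ)^{-(1/8-\delta/2)}$ compared with the paper's $(PQ)^{-\delta}$. What the paper's approach buys is uniformity---one lemma does everything, and the same machinery is reused in the proofs of Theorems~\ref{thm:consistency} and~\ref{thm:recovery-error}.

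One point to tighten: the bound $\mathbb E\|P_{\bar\Omega}\bm N_{p,q}\|_S\lesssim\sqrt\tau(\sqrt{pq}+\sqrt{p^*q^*})$ is correct, but ``variance $\tau$, standard spectral bound'' undersells it. The entries $\delta_{ij}N_{ij}$ have sub-Gaussian norm $O(1)$, not $O(\sqrt\tau)$, so a plain sub-Gaussian matrix inequality would lose the $\sqrt\tau$. You need a variance-sensitive bound (e.g., Latala's inequality, the Bandeira--van Handel inequality, or conditioning on $\Omega$ and using the Gaussian spectral bound with high-probability row and column degree control). This does not affect the conclusion, since the slack $(PQ)^{-(1/8-\delta/2)}$ more than absorbs the logarithmic factors those sharper inequalities introduce.
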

Since $\mathbb E[G(p_0, q_0)]\approx \tau\lambda$, Theorem~\ref{thm:gap} shows that the representation gap of the signal part is preserved using the criterion function $G(p, q)$. 
The proof of the theorem can be found in the Appendix.

On the face of it, Theorem~\ref{thm:gap} is similar to the one for the Kronecker product model introduced in \citet{cai2019kronecker}. However, the underlying matrix completion problem makes it substantively different, in the form and even more so in the proof. We mention two major differences here. First of all, in the matrix completion problem, the candidate set $\mathcal C_\delta$ has to be smaller than the one considered in \citet{cai2019kronecker}, where in the latter case, only two extreme configurations $(0, 0), (P, Q)$ are excluded from $\mathcal C$. The model complexity is controlled by excluding extreme configurations.
From the perspective of the aspect ratio of $\mathcal R_{p, q}[P_\Omega\bm Y]$, the aspect ratio is often controlled as a constant as matrix size increases to infinity in existing literature on conventional matrix completion problems \citep{candes2009exact, candes2010matrix, jain2013low}. Within the configuration set $\mathcal C_\delta$, the aspect ratio is allowed to shrink to 0 or increases to infinity with a controlled rate between $(PQ)^{-1/2+2\delta}$ and $(PQ)^{1/2-2\delta}$. Configurations that result in the aspect ratios of $R_{p, q}[P_\Omega\bm Y]$ outside this range are excluded. 
Second, our assumption on the signal-to-noise ratio and representation gap (Assumption~\ref{assump:snr}) is stronger than its counterpart in \citet{cai2019kronecker}. The reason is that the missingness contributes more to the fluctuation of $G(p, q)$ than the noise matrix $\bm E$ itself (see equation \eqref{eq:proof-noise-bound} in the proof). 

\begin{thm}[{\bf Consistency}]\label{thm:consistency}
 Let $(\hat p, \hat q) = \argmax_{(p, q)\in\mathcal C_\delta}\ G(p, q)$. Under Assumptions~\ref{assump:true-config} to \ref{assump:snr}, we have
$$P[(\hat p , \hat q) = (p_0, q_0)]\geqslant 1 - C\dfrac{|d(P)||d(Q)|}{(PQ)^{3/4+3\delta}},$$
for some constant $C$, where $|d(P)|$ is the size of the set $d(P)$.
\end{thm}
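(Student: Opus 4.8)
The plan is to deduce the consistency result (Theorem~\ref{thm:consistency}) from the expectation gap already established in Theorem~\ref{thm:gap} via a union bound over all candidate configurations, combined with concentration of the criterion function $G(p,q)$ around its mean. The event $\{(\hat p,\hat q)\neq(p_0,q_0)\}$ is contained in the event that some wrong configuration $(p,q)\in\mathcal W_\delta$ achieves $G(p,q)\geqslant G(p_0,q_0)$. Writing $G(p,q)=\mathbb E[G(p,q)]+\big(G(p,q)-\mathbb E[G(p,q)]\big)$, if the true configuration beats each wrong one in expectation by the gap $\Delta:=\tau\lambda(1-\phi)(1+o(1))$ from Theorem~\ref{thm:gap}, then for $(\hat p,\hat q)$ to land on a wrong $(p,q)$ the two fluctuation terms $G(p_0,q_0)-\mathbb E[G(p_0,q_0)]$ and $G(p,q)-\mathbb E[G(p,q)]$ must together overcome $\Delta$. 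Thus I would bound
\begin{align*}
P[(\hat p,\hat q)\neq(p_0,q_0)] \leqslant \sum_{(p,q)\in\mathcal W_\delta} P\Big[\,\big|G(p,q)-\mathbb E[G(p,q)]\big| + \big|G(p_0,q_0)-\mathbb E[G(p_0,q_0)]\big| \geqslant \Delta\,\Big].
\end{align*}

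The key probabilistic ingredient is therefore a concentration inequality for $G(p,q)=\|\mathcal R_{p,q}[P_\Omega\bm Y]\|_S$ around its expectation, uniformly good enough so that each term in the sum is bounded by roughly $(PQ)^{-3/4-3\delta}$, which when multiplied by the number of candidates $|d(P)||d(Q)|$ yields the stated bound. The natural tool is the fact that the spectral norm is a $1$-Lipschitz function (in Frobenius norm) of the matrix entries, so that $G(p,q)$ is a Lipschitz function of the independent randomness, namely the Bernoulli masking variables $\delta_{ij}$ and the Gaussian noise entries $E_{ij}$. For the Gaussian part I would invoke the Gaussian concentration inequality for Lipschitz functions (Borell--TIS), and for the bounded Bernoulli part a bounded-differences / McDiarmid argument or Talagrand's inequality for convex Lipschitz functions of independent bounded variables. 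Combining the two gives a sub-Gaussian tail of the form $\exp(-c\,t^2/v)$ where $v$ captures the effective variance scale contributed by the noise level $\sigma/\sqrt{PQ}$ and the missingness; the key point already flagged in the text after equation~\eqref{eq:proof-noise-bound} is that the missingness is the dominant source of fluctuation, so $v$ will be governed by $\tau$ and the entrywise scale of $\bm A\otimes\bm B$, which is controlled through the incoherence parameter $\mu$.

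With the concentration bound in hand, I would set the deviation threshold to half the gap, $t=\Delta/2=\tfrac12\tau\lambda(1-\phi)$, for each of the two fluctuation terms. The crux is to verify that, under Assumption~\ref{assump:snr}, the ratio $t^2/v$ grows fast enough that $\exp(-c\,t^2/v)\leqslant C(PQ)^{-3/4-3\delta}$. This is precisely where the lower bound on $\tau(\lambda/\sigma)^2\psi^4(PQ)^{2\delta}/\log PQ$ in Assumption~\ref{assump:snr} enters: since $\psi^2=1-\phi^2=(1-\phi)(1+\phi)$ and $\phi\leqslant 1$, the squared gap $\Delta^2$ is of order $\tau^2\lambda^2(1-\phi)^2$, comparable to $\tau^2\lambda^2\psi^4$, and dividing by the variance scale $v\sim\sigma^2\tau$ (times a $\mu$- and $(PQ)$-dependent factor) produces exactly a quantity forced to exceed a constant multiple of $\log PQ$ by the assumption, giving the polynomial decay at the right exponent.

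The main obstacle I anticipate is obtaining a sharp enough handle on the variance scale $v$ in the concentration bound, and in particular correctly tracking the role of the incoherence parameter $\mu$ and the observing rate $\tau$ so that the tail exponent matches the claimed $(PQ)^{-3/4-3\delta}$ rate. The Lipschitz constant of $G$ with respect to the Bernoulli variables depends on the maximum magnitude of the entries being masked, which is exactly controlled by $\mu/\sqrt{PQ}$ through the incoherence definition, and getting this dependence to combine cleanly with Assumptions~\ref{assump:incoherence} and~\ref{assump:observing-rate} is the delicate bookkeeping. A secondary subtlety is that the bound must hold uniformly over all $(p,q)\in\mathcal W_\delta$ even though the aspect ratios vary across the candidate set; here the restriction to $\mathcal C_\delta$ guarantees the aspect ratio stays within $[(PQ)^{-1/2+2\delta},(PQ)^{1/2-2\delta}]$, which is what keeps $v$ uniformly controlled and lets a single union bound over the $|d(P)||d(Q)|$ candidates close the argument.
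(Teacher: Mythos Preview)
Your overall strategy---expectation gap from Theorem~\ref{thm:gap} plus concentration plus union bound---is different from the paper's route, and one of your proposed concentration tools will not work. The paper does not concentrate $G(p,q)$ around its mean. Instead it reuses the high-probability deviation bounds \eqref{eq:proof-signal-bound-2} and \eqref{eq:proof-noise-bound} derived in the proof of Theorem~\ref{thm:gap} (these come from Lemma~\ref{lem:restate-bound}, the Keshavan et al.\ spectral-norm perturbation bound for partially observed matrices) together with a fixed threshold $\tau\lambda(1+\phi)/2$: with high probability $G(p_0,q_0)$ exceeds the threshold and every $G(p,q)$, $(p,q)\in\mathcal W_\delta$, falls below it. The exponent $3/4+3\delta$ in the theorem is not obtained by tuning a sub-Gaussian tail to match Assumption~\ref{assump:snr}; it comes directly from the $1/n^3$ failure probability in Lemma~\ref{lem:restate-bound}, since the smaller dimension of any rearranged matrix in $\mathcal C_\delta$ is at least $(PQ)^{1/4+\delta}$.

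The concrete gap in your plan is the bounded-differences step. Flipping a single $\delta_{ij}$ changes one entry of $\mathcal R_{p,q}[P_\Omega\bm Y]$ by $Y_{ij}$, hence the spectral norm by at most $|Y_{ij}|$, so the McDiarmid variance proxy is $\sum_{ij}Y_{ij}^2=\|\bm Y\|_F^2\approx\lambda^2+\sigma^2$, which is of constant order. With $t\asymp\tau\lambda(1-\phi)$ you obtain a tail of order $\exp(-c\tau^2(1-\phi)^2)$, which does not decay in $PQ$ and cannot survive the union bound. Talagrand's convex-Lipschitz inequality can salvage the argument, because the Euclidean Lipschitz constant of $G$ in the Bernoulli variables is $\max_{ij}|Y_{ij}|\leqslant(\lambda\mu^2+C\sigma\sqrt{\log PQ})/\sqrt{PQ}$, which does shrink; pushing this through with Assumptions~\ref{assump:observing-rate} and~\ref{assump:snr} yields a tail that is in fact far smaller than $(PQ)^{-3/4-3\delta}$. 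But then you are proving a different (stronger) bound than stated, you must handle the conditioning on $\bm E$ carefully since the Lipschitz constant depends on it, and you must pass from median to mean. If your goal is the stated rate, the paper's threshold argument via Lemma~\ref{lem:restate-bound} is shorter and explains exactly where $3/4+3\delta$ comes from.
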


\textbf{Remark:} Theorem~\ref{thm:consistency} is established for the model with K-rank 1. 
If the K-rank is higher than 1, the same procedure \eqref{eq:config-maximization} can still select the true configuration consistently, under a larger signal-to-noise, and an additional assumption on the relative strengths of the terms in \eqref{rank_KPD}. On the other hand, when the K-rank is greater than 1, it is also of interest to consider the joint selection of the configuration and the K-rank. Since it exceeds the scope of this paper, we leave it to the future work.

\textbf{Remark:} According to the number theory \citep{hardy1979introduction}, for any $\epsilon > 0$, $|d(n)|=o(n^\epsilon)$. Therefore the probability of consistency in Theorem~\ref{thm:consistency} converges to 1 as $PQ\rightarrow\infty$. We note that the convergence rate, as presented in Theorem~\ref{thm:consistency}, is only related to $PQ$ and $\delta$ but not the signal-to-noise ratio $\lambda/\sigma$ or the representation gap $\psi^2$, different from that for configuration determination for matrix denoising based on a fully observed matrix \citep{cai2019kronecker}. Again, this is because of the smaller candidate configuration set controlled by $\delta$. In fact, according to the proof of Theorem~\ref{thm:consistency} shown in Appendix, there is a term $\exp\{-C(\lambda/\sigma)^2\psi^4PQ\}$ in the lower bound of the probability of correct determination. 
However, this term is dominated by the main term. 

\subsection{Property of the recovered matrix}

After the configuration has been determined by maximizing the criterion function $G(p, q)$, the second step is to recover the original matrix using Algorithm~\ref{alg:alternating-minimization}. It is known that a completely missed row or column cannot be recovered by a low rank matrix completion. There is a similar phenomenon under the model \eqref{rank_KPD}. For example, if the true configuration $(p_0,q_0)$ is used in Algorithm~\ref{alg:alternating-minimization}, and if a block of $\bm Y$ is completely missing, then the corresponding element of $\bm A_i$ is not recoverable, and neither is the whole missing block. The top row of Figure~\ref{fig:locations} illustrates the situation: the first block (highlighted in black in the upper left panel) is completely missing, and becomes a completely missed row (in the upper right panel) after rearrangement. Therefore, the first element of $\bm A_i$ in \eqref{rank_KPD}, or the first entry of $u_i$ in \eqref{rank_SVD} is not recoverable. 
We provide a formal characterization of irrecoverable entries in Definition~\ref{def:irrecoverable}. 

\begin{defi}[{\bf Irrecoverable entries}]\label{def:irrecoverable}
The entry $\bm X_{i,j}$ is irrecoverable with respect to configuration $(p, q)$ under the observation set $\Omega$ if
either the entire row or the entire column of $\mathcal R_{p, q}[P_\Omega\bm Y]$ that contains $\bm X_{i,j}$ is completely missing.\\
Equivalently, using the notation of index set $\Omega$, the entry $\bm X_{i,j}$ is irrecoverable if either
\begin{equation}
    \left\{(i',j')\in\Omega: \left\lfloor\frac{i-1}{p^*}\right\rfloor=\left\lfloor\frac{i'-1}{p^*}\right\rfloor,\  \left\lfloor\frac{j-1}{q^*}\right\rfloor=\left\lfloor\frac{j'-1}{q^*}\right\rfloor\right\}=\emptyset, \label{eq:irrecoverable-row}
\end{equation}
or
\begin{equation}
    \left\{(i',j')\in\Omega: i'=i+lp^*,\ j'=j+mq^*\text{ for some }l,m\in\mathbb Z\right\}=\emptyset,\label{eq:irrecoverable-col}
\end{equation}
where $\lfloor x\rfloor$ denotes the largest integer that is no greater than $x$.
\end{defi}

\noindent\textbf{Remark:} In Definition~\ref{def:irrecoverable}, \eqref{eq:irrecoverable-row} means 
the row in $\mathcal R_{p, q}[P_\Omega \bm Y]$ that contains $\bm X_{i,j}$ is completely missing, and \eqref{eq:irrecoverable-col} implies that the entire column in $\mathcal R_{p, q}[P_\Omega \bm Y]$ that contains $\bm X_{i,j}$ is missing.

\begin{lem}[{\bf Recoverability}]\label{lem:recoverable}
Under Assumptions~\ref{assump:true-config} and \ref{assump:observing-rate}, as $PQ\rightarrow\infty$, 
$$P[\text{all entries in $P_\Omega\bm Y$ are recoverable under $(p_0, q_0)$}]\geqslant 1 - 2\exp\{-C(PQ)^{1/4-\delta}\log PQ\},$$
for some constant $C$. 
\end{lem}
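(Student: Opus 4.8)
The plan is to reduce the event to a union bound over the rows and columns of the rearranged matrix, and then to control each term using Assumptions~\ref{assump:true-config} and~\ref{assump:observing-rate}. By Definition~\ref{def:irrecoverable}, every entry of $P_\Omega\bm Y$ is recoverable under $(p_0, q_0)$ precisely when no row and no column of $\mathcal R_{p_0, q_0}[P_\Omega\bm Y]$ is completely missing. Recall this rearranged matrix is $p_0 q_0 \times p_0^* q_0^*$: each of its $p_0 q_0$ rows collects the $p_0^* q_0^*$ entries of one $p_0^*\times q_0^*$ block of $\bm Y$, and each of its $p_0^* q_0^*$ columns collects the $p_0 q_0$ entries at a fixed within-block position across all blocks. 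Since under the MCAR scheme every entry is observed independently with probability $\tau$, a given row is entirely missing with probability $(1-\tau)^{p_0^* q_0^*}$ and a given column with probability $(1-\tau)^{p_0 q_0}$. A union bound then shows the failure probability, i.e. the probability that some row or column is completely missing, is at most $p_0 q_0 (1-\tau)^{p_0^* q_0^*} + p_0^* q_0^* (1-\tau)^{p_0 q_0}$.

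Next I would use $1-\tau\le e^{-\tau}$ together with Assumption~\ref{assump:true-config}. The latter bounds $p_0 q_0$ between $(PQ)^{1/4+\delta}$ and $(PQ)^{3/4-\delta}$, and since $p_0^* q_0^* = PQ/(p_0 q_0)$ the same two bounds hold for $p_0^* q_0^*$; in particular both quantities exceed $(PQ)^{1/4+\delta}$ where they appear in the exponents and are at most $PQ$ where they appear as prefactors. This collapses the bound to $2\,PQ\,\exp\{-\tau (PQ)^{1/4+\delta}\}$.

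Finally I would convert the exponent using Assumption~\ref{assump:observing-rate}, which forces $\tau(PQ)^{2\delta}/\log PQ\to\infty$, so for $PQ$ large $\tau(PQ)^{2\delta}\ge c\log PQ$ for an arbitrarily large constant $c$; multiplying by $(PQ)^{1/4-\delta}$ gives $\tau(PQ)^{1/4+\delta}\ge c\,(PQ)^{1/4-\delta}\log PQ$. Hence the failure probability is at most $2\,PQ\,\exp\{-c(PQ)^{1/4-\delta}\log PQ\}$, and writing $PQ=\exp\{\log PQ\}$ the polynomial prefactor is absorbed into the exponential (because $(PQ)^{1/4-\delta}\to\infty$), yielding $2\exp\{-C(PQ)^{1/4-\delta}\log PQ\}$ for a suitable $C$, as claimed.

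This is essentially a coupon-collector / union-bound calculation, so there is no deep obstacle; the only point requiring genuine care is matching the target exponent $(PQ)^{1/4-\delta}$. The reason $\delta$ enters with a minus sign, rather than the $+\delta$ that the block sizes naturally produce in the exponent, is that Assumption~\ref{assump:observing-rate} permits $\tau$ to decay like $(PQ)^{-2\delta}$ up to logarithmic factors; tracking this decay through the exponent, and then absorbing the $PQ$ prefactor, is the step where one must be careful, and it is also where the scheme $PQ\to\infty$ (rather than $P,Q\to\infty$ separately) is used, since only the product $p_0 q_0\cdot p_0^* q_0^*=PQ$ and the $\mathcal C_\delta$ bounds on the two factors are invoked.
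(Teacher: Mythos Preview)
Your proposal is correct and follows essentially the same approach as the paper: both reduce to bounding the probability that some row or column of $\mathcal R_{p_0,q_0}[P_\Omega\bm Y]$ is completely missing via a union bound, then use $1-\tau\le e^{-\tau}$, the lower bound $p_0q_0\wedge p_0^*q_0^*\ge (PQ)^{1/4+\delta}$ from Assumption~\ref{assump:true-config}, and the lower bound on $\tau$ from Assumption~\ref{assump:observing-rate} to obtain the stated exponent. The only cosmetic difference is that the paper writes the no-missing-column probability as $[1-(1-\tau)^{p_0q_0}]^{p_0^*q_0^*}$ and then applies a Bernoulli-type inequality, whereas you go straight to the union bound; the resulting estimates are identical.
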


Lemma~\ref{lem:recoverable} shows that as long as $(p_0, q_0)\in\mathcal C_\delta$, with high probability, there is no missing columns or missing rows in the rearranged matrix $\mathcal R_{p_0, q_0}[P_\Omega \bm Y]$ such that the optimization in \eqref{eq:optimization-rearranged} is well defined. 

\begin{thm}[Recovery error]\label{thm:recovery-error}
Let $\hat {\bm X} = \hat\lambda \hat{\bm A}\otimes \hat{\bm B}$ be the recovered matrix of optimization \eqref{eq:optimization-rearranged} under the true configuration $(p_0, q_0)$ using Algorithm~\ref{alg:alternating-minimization}. Then under Assumption~\ref{assump:incoherence}, with high probability we have
$$
\|\hat{\bm X} - \bm X\|_F\leqslant \lambda \sqrt{PQ}\exp\left\{-\dfrac{C_1\tau(p_0q_0\wedge p_0^*q_0^*)}{\mu^4\log PQ}\right\}
+ C_2\dfrac{\sigma\mu}{\sqrt{\tau}}\sqrt{\log PQ}\cdot (PQ)^{-1/4}(p_0q_0\wedge p_0^*q_0^*)^{-1}
$$
for some global constant $C_1, C_2>0$. 
\end{thm}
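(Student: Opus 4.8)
The plan is to use the Frobenius isometry of the rearrangement operator to reduce the claim to a rank-one noisy matrix completion problem and then invoke the convergence analysis of the ALS algorithm. Because $\mathcal R_{p_0,q_0}$ preserves the Frobenius norm, $\|\hat{\bm X}-\bm X\|_F=\|\hat\lambda\hat u\hat v^T-\lambda uv^T\|_F$ with $u=\vec(\bm A)$ and $v=\vec(\bm B)$, so it suffices to bound the error of recovering the rank-one matrix $\lambda uv^T$ of size $(p_0q_0)\times(p_0^*q_0^*)$ from the partially observed $P_{\bar\Omega_{p_0,q_0}}\mathcal R_{p_0,q_0}[\bm Y]$ as in \eqref{eq:optimization-rearranged}, whose noise $\tfrac{\sigma}{\sqrt{PQ}}\mathcal R_{p_0,q_0}[\bm E]$ still has i.i.d.\ Gaussian entries with per-entry variance $\sigma^2/(PQ)$. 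I would first invoke Lemma~\ref{lem:recoverable} to restrict to a high-probability event on which no row or column of the rearranged matrix is entirely unobserved, so that every least-squares subproblem in Algorithm~\ref{alg:alternating-minimization} is well posed, and then translate Assumption~\ref{assump:incoherence} into the statement that the unit singular vectors $u,v$ are $\mu$-incoherent, which is exactly the hypothesis required by the standard completion analysis.

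The heart of the argument is to split the recovery error into a noiseless-recovery (bias) contribution carrying the signal scale $\lambda$ and a noise contribution carrying the scale $\sigma$, matching the two terms in the bound. For the bias term I would show that the rank-one SVD initialization in Step~4 of Algorithm~\ref{alg:alternating-minimization} falls into the basin of linear convergence via a spectral perturbation bound in the style of \citet{keshavan2010matrix}, and then appeal to the geometric contraction of ALS established by \citet{jain2013low}: each sweep reduces the subspace distance by a constant factor up to a noise floor, provided the oversampling ratio is sufficiently large. Iterating this contraction produces an optimization error of order $\lambda\sqrt{PQ}\exp\{-c\,\tau(p_0q_0\wedge p_0^*q_0^*)/(\mu^4\log PQ)\}$, where the exponent is the effective number of observations per coordinate of the longer singular vector divided by the $\mu^4\log PQ$ sample-complexity factor of the contraction, and the prefactor $\lambda\sqrt{PQ}$ is a crude bound on the initial scale that the exponential dominates. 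For the noise term I would control $\|P_{\bar\Omega_{p_0,q_0}}\mathcal R_{p_0,q_0}[\tfrac{\sigma}{\sqrt{PQ}}\bm E]\|_S$ by random-matrix concentration for a Bernoulli-masked Gaussian ensemble, with the $\sqrt{\log PQ}$ and $\mu$ factors arising from the sparsity of the mask and the incoherence of the singular subspaces, propagate this perturbation through the rank-one reconstruction using the spectral gap $\lambda$, and convert back with $p_0q_0\cdot p_0^*q_0^*=PQ$ to obtain the stated polynomial rate.

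The main obstacle is adapting the ALS convergence analysis of \citet{jain2013low}, which is typically stated for an aspect ratio bounded by a constant, to configurations in $\mathcal C_\delta$ whose aspect ratio $(p_0q_0)/(p_0^*q_0^*)$ may grow or shrink polynomially in $PQ$. I expect the crux to be verifying, under Assumptions~\ref{assump:true-config} and \ref{assump:observing-rate}, that the per-iteration noise term stays below the signal level and that the contraction factor remains bounded away from one uniformly over this range, and then tracking the precise dependence on $\mu$, $\tau$, and the two dimensions through both the contraction and the noise floor so as to land on the exact exponents in the statement.
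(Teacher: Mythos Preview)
Your proposal is correct and follows essentially the same route as the paper: reduce via the Frobenius isometry of $\mathcal R_{p_0,q_0}$ to a rank-one noisy completion problem, split the error into an exponentially decaying optimization term and a noise-floor term, and control the latter through a spectral bound on the masked Gaussian (the paper uses Lemma~\ref{lem:restate-bound} together with $|\Omega|\geqslant C\tau PQ$). The only difference is that the paper does not reconstruct the ALS contraction analysis from \citet{jain2013low} and \citet{keshavan2010matrix} as you outline; it instead invokes a packaged result of \citet{gunasekar2013noisy} (stated as Lemma~\ref{lem:als-error} in the appendix, already formulated for arbitrary $m\leqslant n$) as a black box, which also dispenses with your aspect-ratio concern.
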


Theorem~\ref{thm:recovery-error} gives the error bound for the recovered matrix under the true configurations. Since in Assumption~\ref{assump:true-config} $p_0q_0\wedge p_0^*q_0^* \geqslant (PQ)^{1/4+\delta}$, the error bound in Theorem~\ref{thm:recovery-error} can be revised to 
\begin{equation}
\|\hat{\bm X}-\bm X\|_F\leqslant \lambda \exp\left\{-\dfrac{C_3\tau (PQ)^{1/4+\delta}}{\mu^4\log PQ}\right\} + C_2\dfrac{\sigma\mu}{\sqrt{\tau}}\sqrt{\log PQ}\cdot (PQ)^{-1/2-\delta}.\label{eq:recovery-error-bound}
\end{equation}

Under Assumption~\ref{assump:observing-rate}, one can verify both terms on the right hand side converges to 0. The second term in \eqref{eq:recovery-error-bound} dominates. 
Using $\tau\gg \mu^4(PQ)^{-2\delta}$ from Assumption~\ref{assump:observing-rate} and $\mu = o((PQ)^{\delta/2}$ from Assumption~\ref{assump:incoherence}, one can further have 
$$\|\hat{\bm X} - \bm X\|_F = o_p\left(\sigma \sqrt{\log PQ}(PQ)^{-1/2}\right).$$


\section{Aggregated Estimation}\label{sec:averaging}

Instead of selecting the single best configuration from the candidate set, an alternative is
to incorporate several configurations and combine the recovered matrices under these configurations, leading to a aggregated recovery procedure. One immediate motivation and advantage of this approach is that it alleviates the infeasibility problem when only one configuration is used. Note that if a block is completely missing under the selected configuration $(\hat p,\hat q)$, then all the missing entries in this block cannot be recovered (see  Definition~\ref{def:irrecoverable}). 

We explain heuristically how aggregation can help to alleviate this issue. Let $(p_0, q_0)$ be the true configuration. Suppose the first $P/p_0\times Q/q_0$ block of $\bm Y$ is completely unobserved as in Figure~\ref{fig:locations-origin}, where missing entries in the first block are highlighted in black, and other missing entries are marked in gray. After the rearrangement using the true configuration $(p_0,q_0)$, this block becomes a completely unobserved row in Figure~\ref{fig:locations-true-config}.
If we consider using another configuration $(p, q)$ with $p\geqslant p_0$ and $q\geqslant q_0$, there will be more completely missed rows after the rearrangement. For example, Figure~\ref{fig:locations-config-1} shows $\mathcal R_{p, q}[\bm Y]$ with $(p, q)=(2p_0, q)$, which has two missing rows.
On the other hand, under a configuration $(p,q)$ such that either $p<p_0$ or $q<q_0$, there won't be any completely missed rows after rearrangement. Figure~\ref{fig:locations-config-2} depicts the missing entries of $\mathcal R_{p, q}[\bm Y]$ with $(p, q) = (p_0/2, 2q_0)$, under which those pixels in black are recoverable.
There can be multiple configurations under which the completely missing block is recoverable. Aggregate the recovered matrices under such configurations helps to recover the blocks that are completely missing.


\begin{figure}[!htb]
    \centering
    \begin{subfigure}[t]{0.48\textwidth}
    \centering
    \caption{$\Omega$}\label{fig:locations-origin}
    \frame{\includegraphics{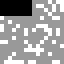}}
    \end{subfigure}
    \begin{subfigure}[t]{0.48\textwidth}
    \centering
    \caption{$\bar\Omega_{p_0, q_0}$}\label{fig:locations-true-config}
    \frame{\includegraphics{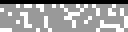}}
    \end{subfigure}\\[1em]
    \begin{subfigure}[t]{0.48\textwidth}
    \centering
    \caption{$\bar\Omega_{2p_0, q_0}$}\label{fig:locations-config-1}
    \frame{\includegraphics{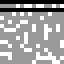}}
    \end{subfigure}
    \begin{subfigure}[t]{0.48\textwidth}
    \centering
    \caption{$\bar\Omega_{p_0/2, 2q_0}$}\label{fig:locations-config-2}
    \frame{\includegraphics{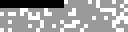}}
    \end{subfigure}
    \caption{Locations of unobserved entries under rearrangements of different configurations, where white pixels represent observed entries, black pixels represent the unobserved entries in the first (upper-left most) block and gray pixels represent other missing entries.}
    \label{fig:locations}
\end{figure}

In this section we aim to provide an empirical procedure of the aggregated estimation, and discuss its properties and advantages. We also propose to use cross validation to select the number of configurations to be incorporated. 

\subsection{Aggregated Estimation}



Let $C_k=(p_{k},q_{k}), k=1,2, \ldots$
be the sequence of configurations ordered according to the criterion function \eqref{eq:config-maximization} within the candidate configuration set $\mathcal C_\delta$,
and let $\hat{\bm X}_k$ be the estimated $\bm X$ using configuration $C_k$. 
For the $(i,j)$-th entry of $\bm X$, and the $k$-th configuration $C_k$, define $\nu_{ijk}$ be the feasibility indicator which takes value 0 if the $(i,j)$-th entry is infeasible under configuration $C_k$, and 1 otherwise.
Let $d_{ij}=\min\{k:\,\nu_{ijk}=1\}$ so $C_{d_{ij}}$
is the best configuration under which $(i,j)$-th entry is feasible.
For a given integer $d>0$, the final estimate of the
$(i,j)$-th entry of ${\bm X}$ is obtained as a weighted average of the $(i,j)$-th entries of $\hat{\bm X}_k$, 
\begin{equation}
\hat{\bm X}[i,j]=
\frac{\sum_{k=1}^{d \vee d_{ij}} w_{k}\nu_{ijk}
\hat{\bm X}_k[i,j]}
{\sum_{k=1}^{d \vee d_{ij}} w_{k}\nu_{ijk}}\label{eq: averaging}
\end{equation}
where $w_{k}$ is the weight assigned to configuration
$C_k$. The simplest choice would be the constant weights such that $w_k\equiv 1$. A more refined approach is to use a set of weights that reflects the accuracy of each configuration. 

For most of the missing entries, the aggregated estimator take the weighted average of the recovered entries among the best $d$ configurations. If an entry $(i,j)$ is infeasible under all the $d$ best configurations $C_1,\ldots, C_d$, the above estimator uses the best configuration $d_{ij}>d$ 
under which $(i,j)$-th entry is feasible, and 
fill it with the recovered entry under that configuration. 

{\bf Remark:} The aggregated estimation \eqref{eq: averaging} can also be viewed as a model averaging procedure. The benefit of the aggregation is multi-fold. First it provides an effective approach to handle the infeasibility issue. The probability that there is an entry that is infeasible under all possible configurations is extremely small, for reasonably large $(M,N)$, and number of factors of 
$P$ and $Q$. Hence the procedure is able to handle higher missing rates. Second, model averaging can potentially provide more robust and stable estimators, as demonstrated in many studies in statistics literature \citep{Buckland1997model, raftery1997bayesian}. Note that for each possible configuration, there is a corresponding
KPD (\ref{KPD}), which, when truncated at a given K-rank $r$, provides an approximation to the signal matrix $\bm X$ in (\ref{eq:kronecker-model}), though different approximations under different configurations and K-ranks are of different qualities. Averaging over the best performing configurations can potentially improve the quality of matrix recovery.  Third, aggregation provides sharper resolution in the completed matrix, particularly in image reconstruction. Kronecker products induce a
block structure in the resulting matrix hence often produce `grainy' images. Averaging over several 
configurations reduces such effects. Fourth,
the final predictive model after averaging is equivalent to a hybrid Kronecker product model  
\[
\bm X=\sum_{k=1}^d w_k{\bm X_k},
\]
where each $\bm X_k$ assumes a KPD of form
 (\ref{rank_KPD}) under configuration
$C_k$. The aggregation approach bypasses the difficulty of jointly estimating such a model as well as  determining the configurations in such a model.
The effectiveness of the approach will be demonstrated in the empirical study.

{\bf Remark:} In some real applications (for example image completion), it is difficult to justify that  $\bm X$ has Kronecker rank 1 and it is possible that $\bm X$ is instead a sum of several Kronecker products with different configurations (see the hybrid decomposition in \cite{cai2019hkopa}). The aggregation procedure is actually identifying those most plausible configurations and recovers the matrix $\bm X$ as a mixture of them as in \eqref{eq: averaging}. 

{\bf Remark:} The empirical study shows that the performance is less sensitive to the number of configurations used in the aggregated recovery \eqref{eq: averaging} if the corner configurations (small or large $m+n$) are excluded. More detailed investigation may be needed.

{\bf Remark:} In practice the matrix dimension $P$ and $Q$ may not have many factors, which limits the flexibility of the KPD approach as the candidate set
$\mathcal C$ can be small. In this case it is possible to augment the
observed matrix with additional missing rows and columns so that the new dimensions $P^*$ and $Q^*$ have more factors. One such choice
is to make $P^*=2^M$ and $Q^*=2^N$. One can also use different $P^*$ and $Q^*$ as part of the aggregating operation.
With a good configuration
determination procedure and effective aggregation, significant improvement in matrix completion tasks
can be obtained. 

\subsection{Choose Number of Terms by Cross-validation}
In practice, it remains to determine the number of different configurations to be incorporated in the aggregation. Here we propose an empirical procedure for determining the number of terms by the $K$-fold cross-validation. Recall $\Omega$ is the set of indices of the observed entries in $\bm Y$. Let $\Omega^{[1]},\cdots, \Omega^{[K]}$ be a random $K$-fold partition of $\Omega$ such that $\Omega^{[1]}\cup\cdots\cup \Omega^{[K]}=\Omega$ and for any $1\leqslant i < j\leqslant K$, we have $\Omega^{[i]}\cap\Omega^{[j]}=\emptyset$ and $||\Omega^{[i]}|-|\Omega^{[j]}||\leqslant 1$. Suppose $\hat {\bm X}(k;\Omega\setminus\Omega^{[i]})$ is the aggregated estimate for $\bm X$ as constructed in \eqref{eq: averaging}, which is obtained by fitting Kronecker matrix completion models on $P_{\Omega\setminus\Omega^{[i]}}\bm Y$ and aggregating the results from the first $k$ configurations in $\mathcal C_\delta$. The $K$-fold cross-validation mean squared error (CV-MSE) is defined by
\begin{align}
    \text{CV-MSE}(k) &= \dfrac{1}{|\Omega|} \left\| P_{\Omega}\bm Y - \sum_{i=1}^K P_{\Omega^{[i]}}\hat{\bm X}(k;\Omega\setminus \Omega^{[i]})\right\|_F^2\nonumber\\
    &=\dfrac{1}{|\Omega|} \left\| \sum_{i=1}^K \left\{P_{\Omega^{[i]}}\bm Y-P_{\Omega^{[i]}}\hat{\bm X}(k;\Omega\setminus \Omega^{[i]})\right\}\right\|_F^2
    .
    \label{eq:cv-mse}
\end{align}
The number of models can be selected by minimizing CV-MSE$(k)$ over $k=1,2,\dots$. In principle, CV-MSE$(k)$ can also be used to estimate the performance $\|\bm Y - \hat{\bm X}(k;\Omega)\|_F^2$, including the unobserved entries as well, which cannot be calculated directly.

In $i$-th cross-validation fold, we use $P_{\Omega\setminus \Omega^{[i]}}\bm Y$ as the training set and use $P_{\Omega^{[i]}}\bm Y$ as the test set. The split of data decreases the observation rate in $P_{\Omega\setminus \Omega^{[i]}}\bm Y$ by $1/K$ and increase the unrecoverability and the number of unrecoverable entries, which may worsen the performance. To minimize the potential impact of the decrease in observing rate, a large value of $K$ is preferred. On the other hand, larger $K$ involves heavier computation. In the numerical analysis that follows, we choose $K=10$ as a compromise. 

\section{Empirical Examples}\label{sec:example}

\subsection{Simulation: Configuration Selection}\label{sec:simulation}
In this simulation experiment, we demonstrate the performance of the configuration determination procedure under the model \eqref{eq:kronecker-model} with a $\bm X$ of K-rank 1. Specifically, the component matrices $\bm A$ and $\bm B$ are randomly generated as
\begin{align*}
    \bm A & = \binom{\sqrt{1-\varphi^2}}{0}\otimes \bm D_1 + \binom{0}{\varphi}\otimes \bm D_2,\\
    \bm B & = \binom{\sqrt{1-\varphi^2}}{0}\otimes \bm D_3 + \binom{0}{\varphi}\otimes \bm D_4,
\end{align*}
where $\|\bm D_i\|_F=1,\,i=1,2,3,4$, $\bm D_1$ and $\bm D_2$ are $2^{m_0-1}\times 2^{n_0}$ matrices such that $\mathrm{tr}(\bm D_1\bm D_2^T)=0$, and $\bm D_3$ and $\bm D_4$ are  $2^{M-m_0-1}\times 2^{N-n_0}$ matrices such that $\mathrm{tr}(\bm D_3\bm D_4^T)=0$. We design such a construction so that the representation gap can be controlled at roughly $\varphi^2$. 
Let $\bm X = \lambda \bm A\otimes \bm B,$
where $\lambda$ is the parameter used to control the signal-to-noise ratio. The underlying complete matrix
$\bm Y$ is generated according to
$\bm Y = \bm X + \sigma 2^{-(M+N)/2}\bm E,$
where $\bm E$ contains IID standard normal entries. The observation set $\Omega$ is sampled independently such that $P[(i, j)\in\Omega] = \tau$ for all $(i, j)$.

We consider two dimension and configuration setups: $M=N=9$ with true configuration $(m_0, n_0)=(4, 4)$, and $M=N=10$ with true configuration $(m_0, n_0)=(5, 4)$. All combinations of
20 different signal-to-noise ratio values $\lambda/\sigma\in\{0.1, 0.2,\dots, 2.0\}$, two different observing rates
$\tau=0.1$ and $0.2$, three different values of $\varphi^2=0.3, 0.4$, and $0.5$ and three different candidate sets $\mathcal C_5, \mathcal C_6$, and $\mathcal C_7$ are considered, where
$$\mathcal C_s := \{(p, q): p\in d(P), q\in d(Q), 2^s\leqslant pq\leqslant PQ 2^{-s}\}.$$
here $s$ is an integer satisfying $(M+N)/4< s<(M+N)/2$. 
With slight misuse of notations, the candidate set $\mathcal C_s$ is equivalent to $\mathcal C_\delta$ in \eqref{eq:candidate} with corresponding value of $\delta=s/(M+N)-1/4$. 
However, it is more convenient and intuitive to use $\mathcal C_s$ here. Note that $\mathcal C_5$ is the largest set among the three.

For each combination of $\lambda/\sigma$, $\tau$, $\varphi^2$ and $\mathcal C$, we repeat the simulation 100 times and record how many times the true configuration
$(m_0, n_0)$ is selected by the criterion in \eqref{eq:config-maximization}. These empirical frequencies are plotted as functions of the signal-to-noise ratio in Figure~\ref{fig:consistency}
for different combinations of $(M, N)$, $\tau$, $\varphi^2$ and $\mathcal C$. We see the general pattern that these functions are all monotone, and when the signal-to-noise ratio is large enough, chances of choosing the correct configuration approach one.

The left panel in Figure~\ref{fig:consistency} reveals that a larger signal-to-noise ratio is required when the candidate set $\mathcal C_s$ is larger (note that $\mathcal C_5$ is the largest among the three of them). This is consistent with Assumption~\ref{assump:snr}, which requires a larger signal-to-noise ratio for a smaller value of $\delta$, i.e. a larger candidate set. 
Theorem~\ref{thm:consistency} also shows that the convergence rate is a decreasing function of $\delta$.
The middle panel of Figure~\ref{fig:consistency} indicates that as the representation gap increases, the probability of making correct choice also increases. 
It is the reason that Assumption~\ref{assump:snr} links the representation gap $\psi^2$ and the required signal-to-noise ratio $\lambda/\sigma$ in a negative relationship. The right panel gives the empirical frequency curves for different dimensions and different observing rates. The pattern is very intuitive, when the dimension is larger, or the observing rate is higher, the performance is better.

\begin{figure}[!htb]
    \centering
    \includegraphics[width=0.32\textwidth]{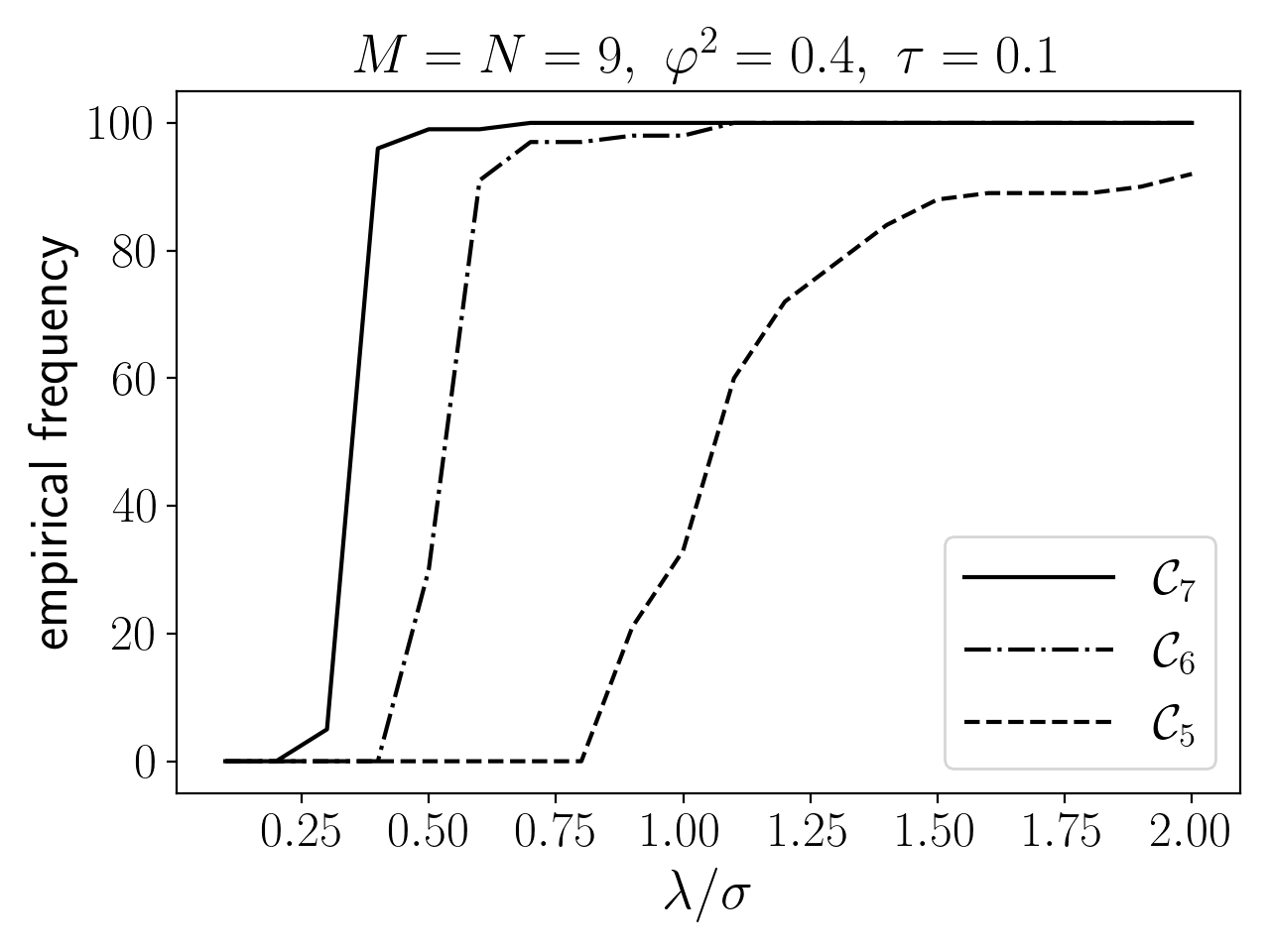}
    \includegraphics[width=0.32\textwidth]{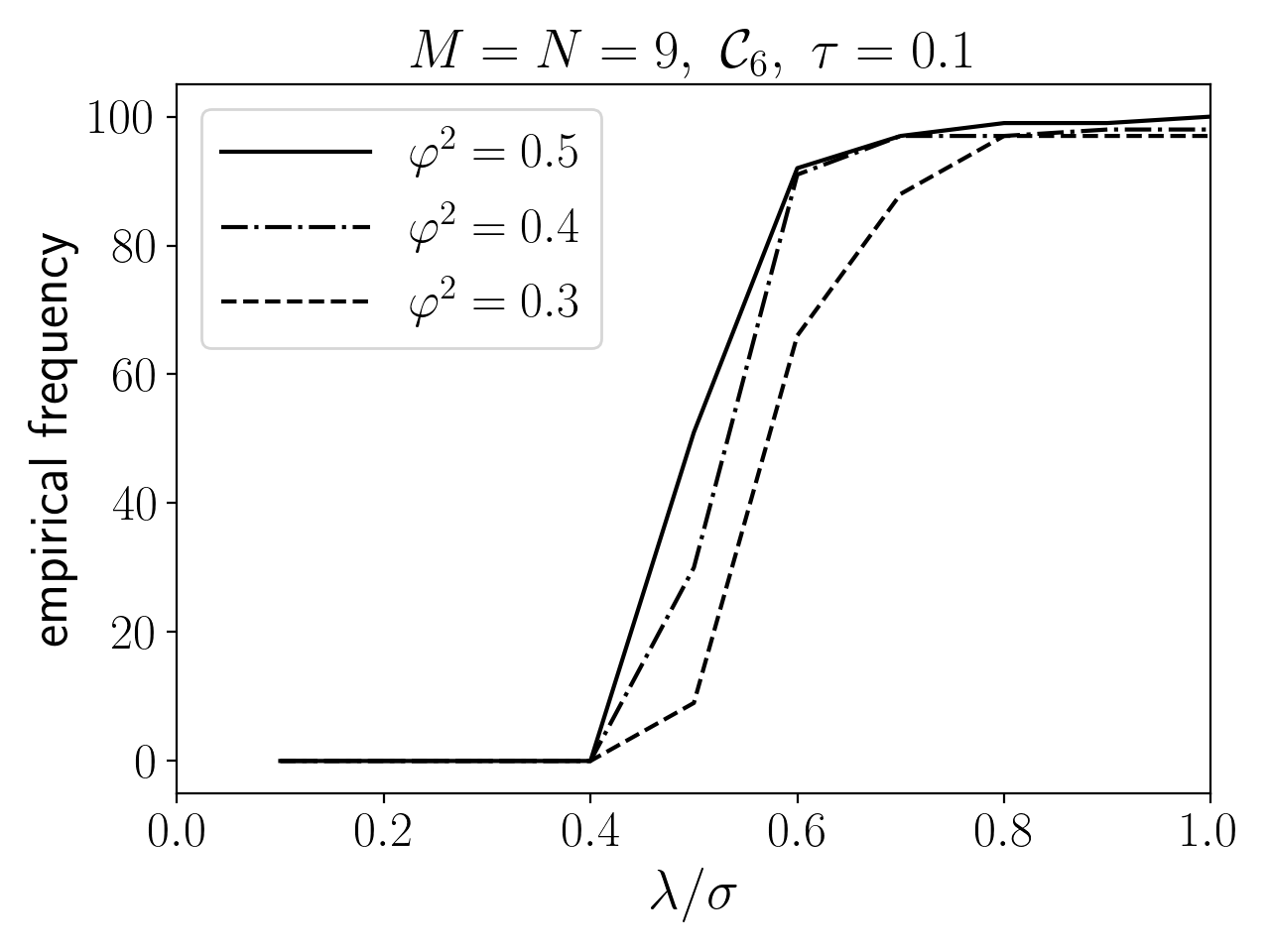}
    \includegraphics[width=0.32\textwidth]{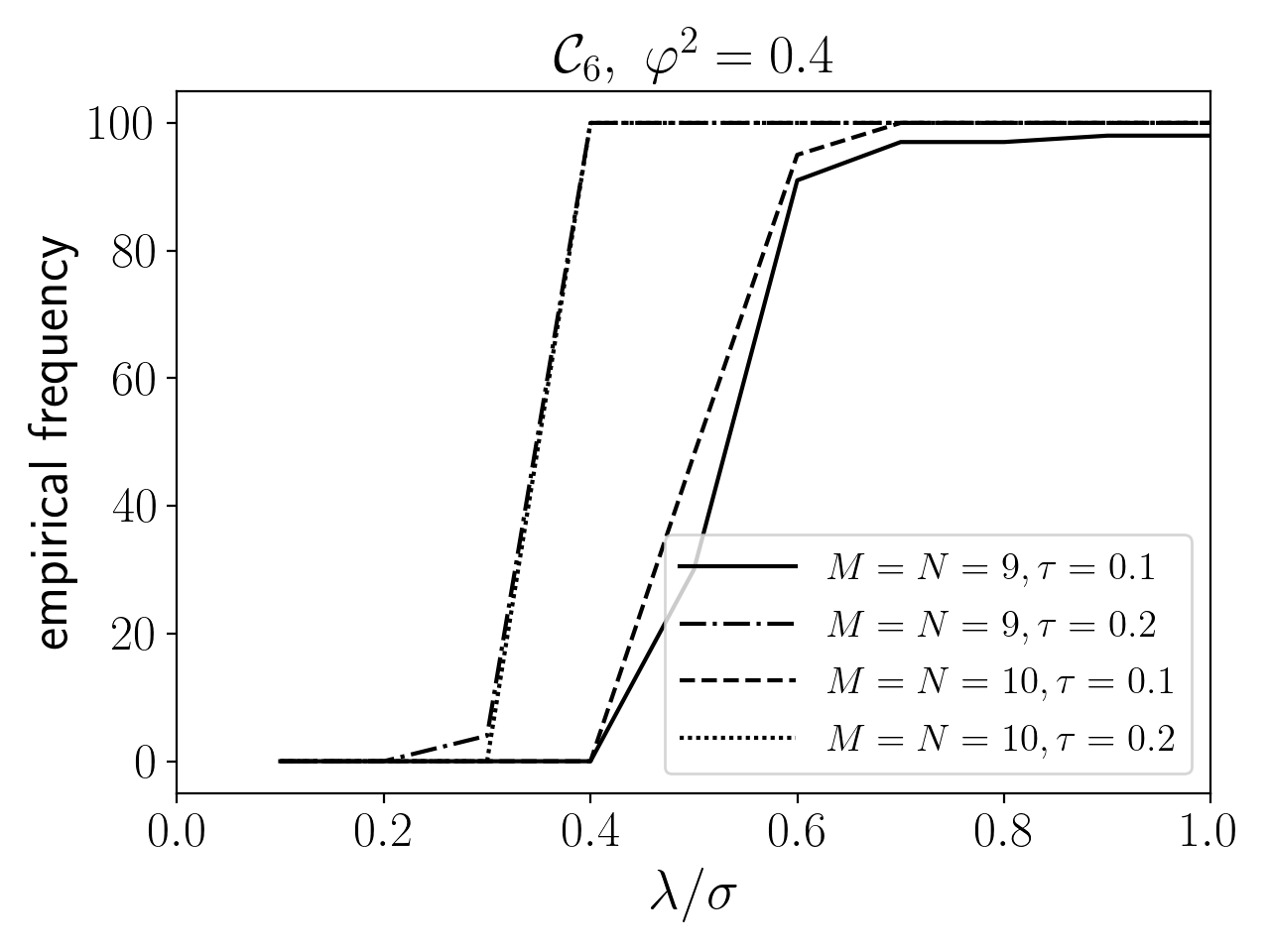}
    \caption{The empirical frequency (in \%) of correct configuration selection increases monotonically as the signal-to-noise ratio for (Left) different candidate sets, (Mid) different representation gaps and (Right) different dimensions and observing rates.}
    \label{fig:consistency}
\end{figure}

To give another comparison, for any combination of $(M, N)$, $\tau$, $\varphi^2$ and $\mathcal C$, we define $\gamma^*$ be the smallest signal-to-noise ratio (truncated to 0.1) such that the empirical frequency of correct configuration selection exceeds 50\%. The results are reported in Table~\ref{tab:transition-point}.

\begin{table}[!htb]
    \centering
    \renewcommand{\arraystretch}{0.9}
    \begin{tabular}{|c|c|ccc|ccc|}
    \hline
     &$\tau$ & \multicolumn{3}{|c}{$0.1$} & \multicolumn{3}{|c|}{$0.2$}\\
     \cline{2-8}
     &$\varphi^2$ & 0.3 & 0.4 & 0.5 & 0.3 & 0.4 & 0.5\\
     \hline
     \multirow{3}{*}{$M=N=9$} & $\mathcal C_5$ & 1.6 & 1.1 & 1.0 & 0.6 & 0.5 & 0.5\\
     & $\mathcal C_6$ & 0.6 & 0.6 & 0.5 & 0.4 & 0.4 & 0.4\\
     & $\mathcal C_7$ & 0.4 & 0.4 & 0.4 & 0.3 & 0.3 & 0.3\\
     \hline
     \multirow{3}{*}{$M=N=10$} & $\mathcal C_5$ & 1.3 & 1.0 & 0.9 & 0.6 & 0.5 & 0.5\\
     & $\mathcal C_6$ & 0.6 & 0.6 & 0.5 & 0.4 & 0.4 & 0.4\\
     & $\mathcal C_7$ & 0.4 & 0.4 & 0.4 & 0.3 & 0.3 & 0.3\\
     \hline
    \end{tabular}
    \caption{Value of $\gamma^*$ for different combinations of $(M, N)$, $\tau$, $\psi^2$ and $\mathcal C$.}
    \label{tab:transition-point}
\end{table}

\subsection{Simulation: Aggregated Estimation}


In this simulation experiment, we look into the performance of the aggregated estimation introduced in Section~\ref{sec:averaging}. The signal part $\bm X\in\mathbb R^{2^M\times 2^N}$ is generated as a mixture of $k_0$ Kronecker products:
$$\bm X = \sum_{i=1}^{k_0} \bm A_i\otimes \bm B_i,$$
where $\bm A_i$ and $\bm B_i$ are $2^{m_i}\times 2^{n_i}$ and $2^{M-m_i}\times 2^{N-n_i}$ matrices, correspondingly. In other words, the $i$-th Kronecker product in $\bm X$ has the configuration $(m_i, n_i)$. (Here for notational simplicity, we use $(m,n)$ instead of $(2^m, 2^n)$ to indicate the configuration.) We design the structure of $\bm A_i$ and $\bm B_i$ as follows:
\begin{align*}
    \bm A_i &= \varphi \bm D_1^{(i)} \otimes \begin{bmatrix}
    \frac{1}{\sqrt{2}} & \frac{1}{\sqrt{2}}
    \end{bmatrix} + \sqrt{1-\varphi^2} \bm D_2^{(i)} \otimes \begin{bmatrix}
    \frac{1}{\sqrt{2}} & -\frac{1}{\sqrt{2}}
    \end{bmatrix},\\
    \bm B_i &= \varphi \begin{bmatrix}
    \frac{1}{\sqrt{2}} \\ \frac{1}{\sqrt{2}}
    \end{bmatrix}\otimes \bm D_3^{(i)} + \sqrt{1-\varphi^2} \begin{bmatrix}
    \frac{1}{\sqrt{2}} \\ -\frac{1}{\sqrt{2}}
    \end{bmatrix}\otimes \bm D_4^{(i)},
\end{align*}
where $\bm D_1^{(i)}$ and $\bm D_2^{(i)}$ are random $2^{m_i}\times 2^{n_i -1}$ matrices such that they both have Frobenius norm 1 and are orthogonal with each other, and so do $\bm D_3^{(i)}$ and $\bm D_4^{(i)}$.
It follows that $\|\bm A_i\|_F = \|\bm B_i\|_F=1$. The quantity $\varphi^2\in(0, 1)$ controls the representation gap of the product $\bm A_i\otimes \bm B_i$. Specifically, at the configurations $(m_i,n_i-1)$ and $(m_i+1,n_i)$, it holds that
$$\|\mathcal R_{m_i,n_i-1}[\bm A_i\otimes\bm B_i]\|_S\approx \|\mathcal R_{m_i+1,n_i}[\bm A_i\otimes\bm B_i]\|_S\approx \varphi \wedge \sqrt{1-\varphi^2}.$$
Furthermore, under the above construction, the incoherence parameters $\mu$ for $\bm A_i$ and $\bm B_i$ are roughly the same ($\approx \sqrt{(m_i+n_i)\wedge(M+N-m_i-n_i)\cdot \log 2}$) for different values of $\varphi$. 

In this experiment, we set $M=N=9$. Two specific constructions of $\bm X$ are considered: (i) $k_0=1$ (one term model) and $(m_1, n_1)=(5, 4)$, and (ii) $k_0=2$ (two term model) and $(m_1, n_1)=(5, 4)$, $(m_2, n_2)=(4, 5)$. Two values of $\varphi^2=0.5$ and $0.05$ are used, corresponding to large and small representation gaps respectively. We label the four scenarios as follows for later references.\\
\textbf{Scenario L1:} One-term $\bm X$ ($k_0=1$) with a large representation gap $\varphi^2=0.5$.\\
\textbf{Scenario S1:} One-term $\bm X$ ($k_0=1$) with a small representation gap $\varphi^2=0.05$.\\
\textbf{Scenario L2:} Two-term $\bm X$ ($k_0=2$) with a large representation gap $\varphi^2=0.5$.\\
\textbf{Scenario S2:} Two-term $\bm X$ ($k_0=2$) with a small representation gap $\varphi^2=0.05$.\\
For each scenario, the matrix 
$$\bm Y = \bm X + \dfrac{2}{2^{(M+N)/2}}\cdot \bm E$$
is partially observed with observing rate $\tau=0.2$. We apply the aggregation approach \eqref{eq: averaging} proposed in Section~\ref{sec:averaging} to $P_\Omega\bm Y$ with equal weights using up to 10 configurations in the candidate set $\mathcal C_7$. For each configuration, we use either one or two Kronecker products with that configuration in the averaging, and refer to them as K-rank-1 and K-rank-2 in the sequel. 
The mean squared error $2^{-(M+N)}\|\hat{\bm X} - \bm X\|_F^2$ based on 100 repetitions are plotted against the number of configurations in Figure~\ref{fig:error_scenario} for the four scenarios. 
\begin{figure}[!htp]
    \centering
    \includegraphics[width=0.4\textwidth]{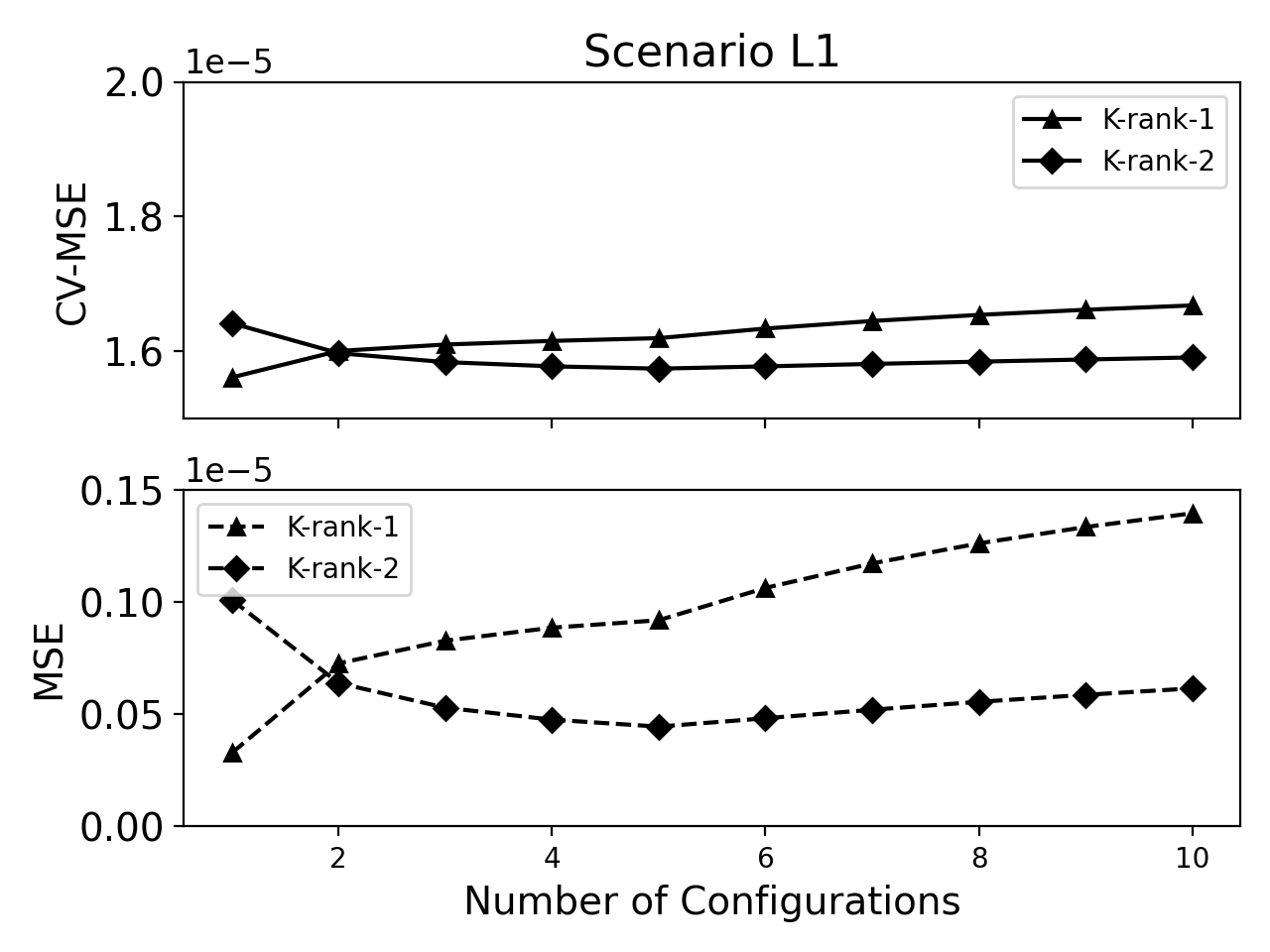}
    \includegraphics[width=0.4\textwidth]{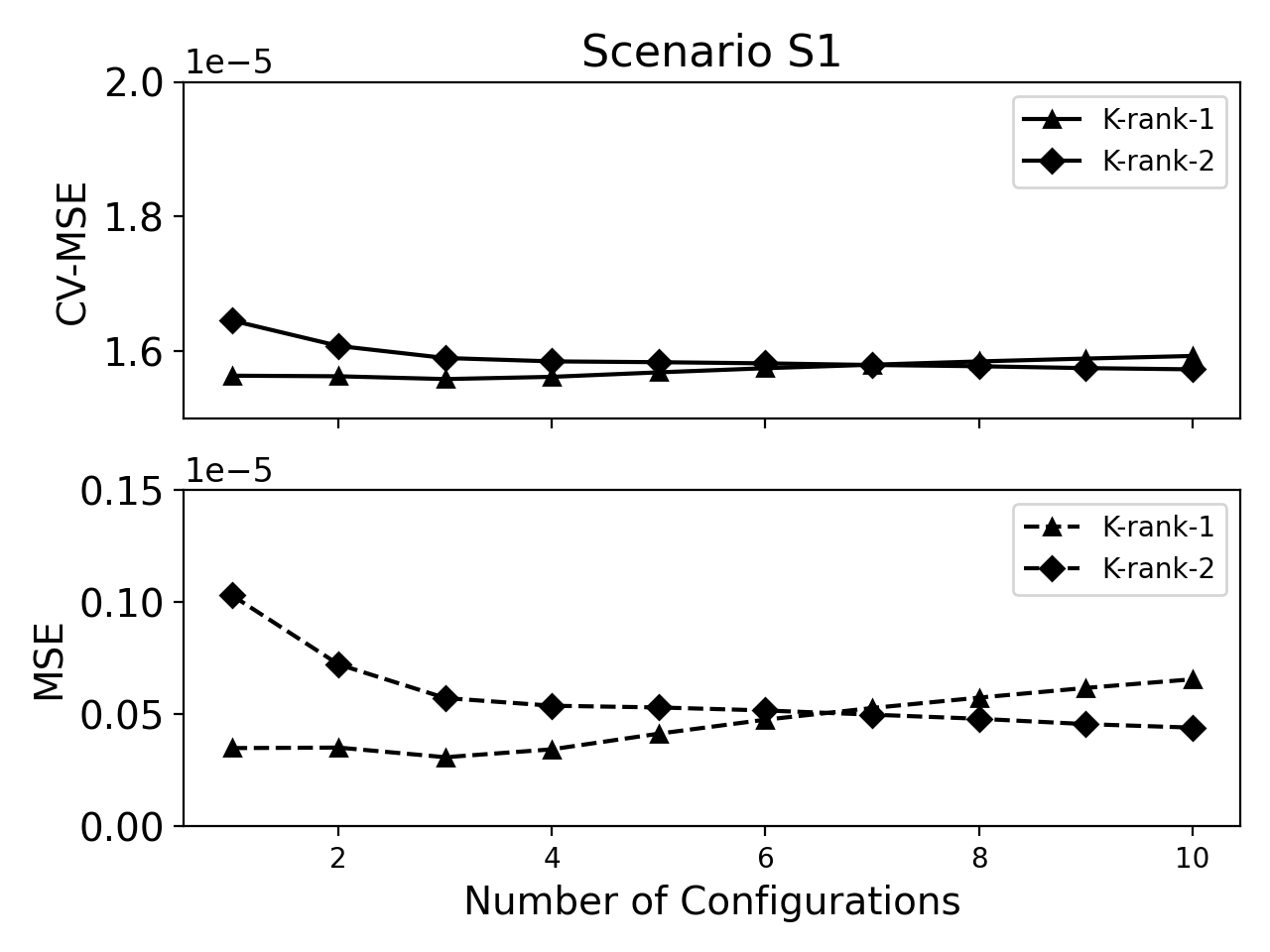}\\
    \includegraphics[width=0.4\textwidth]{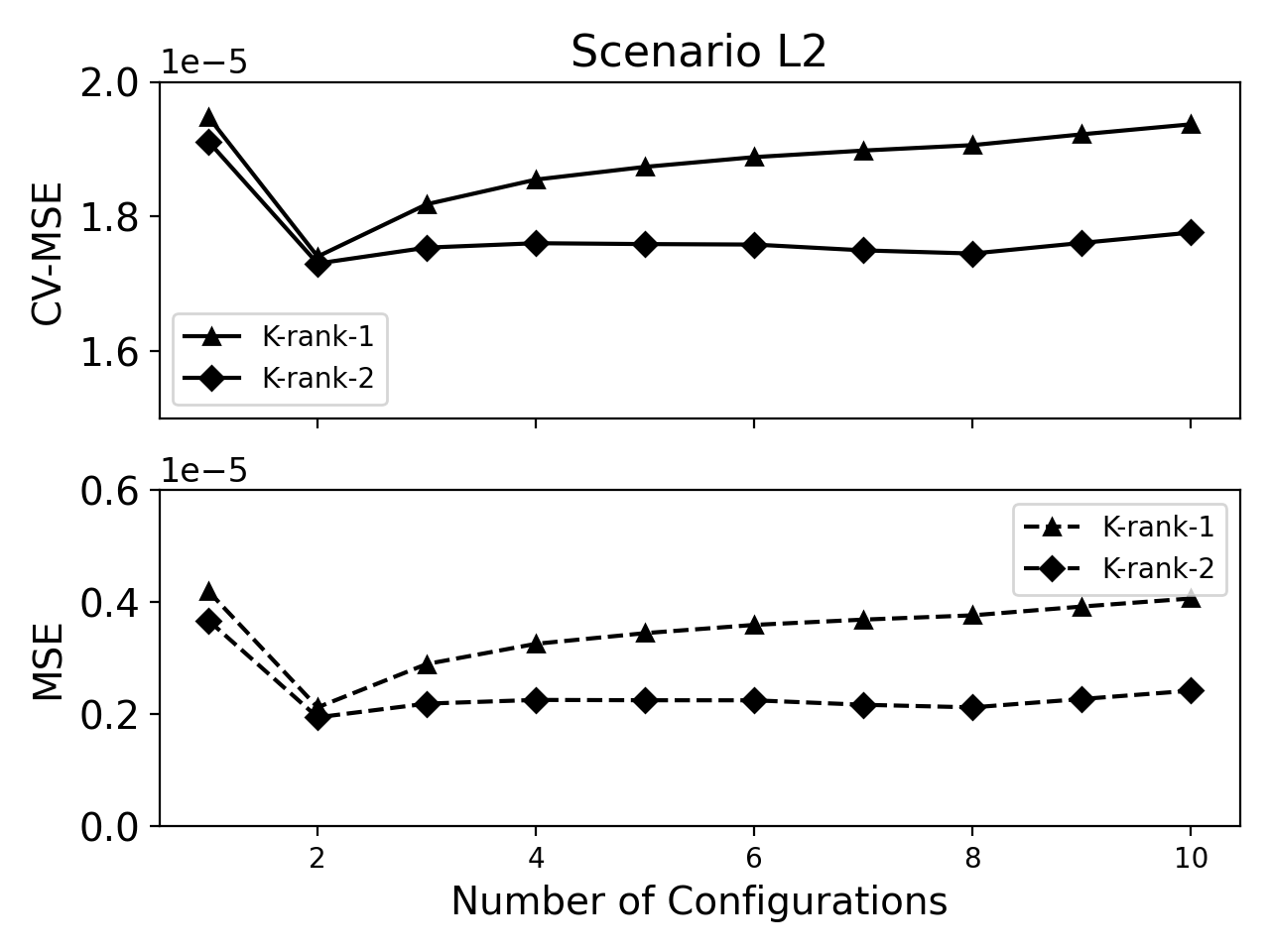}
    \includegraphics[width=0.4\textwidth]{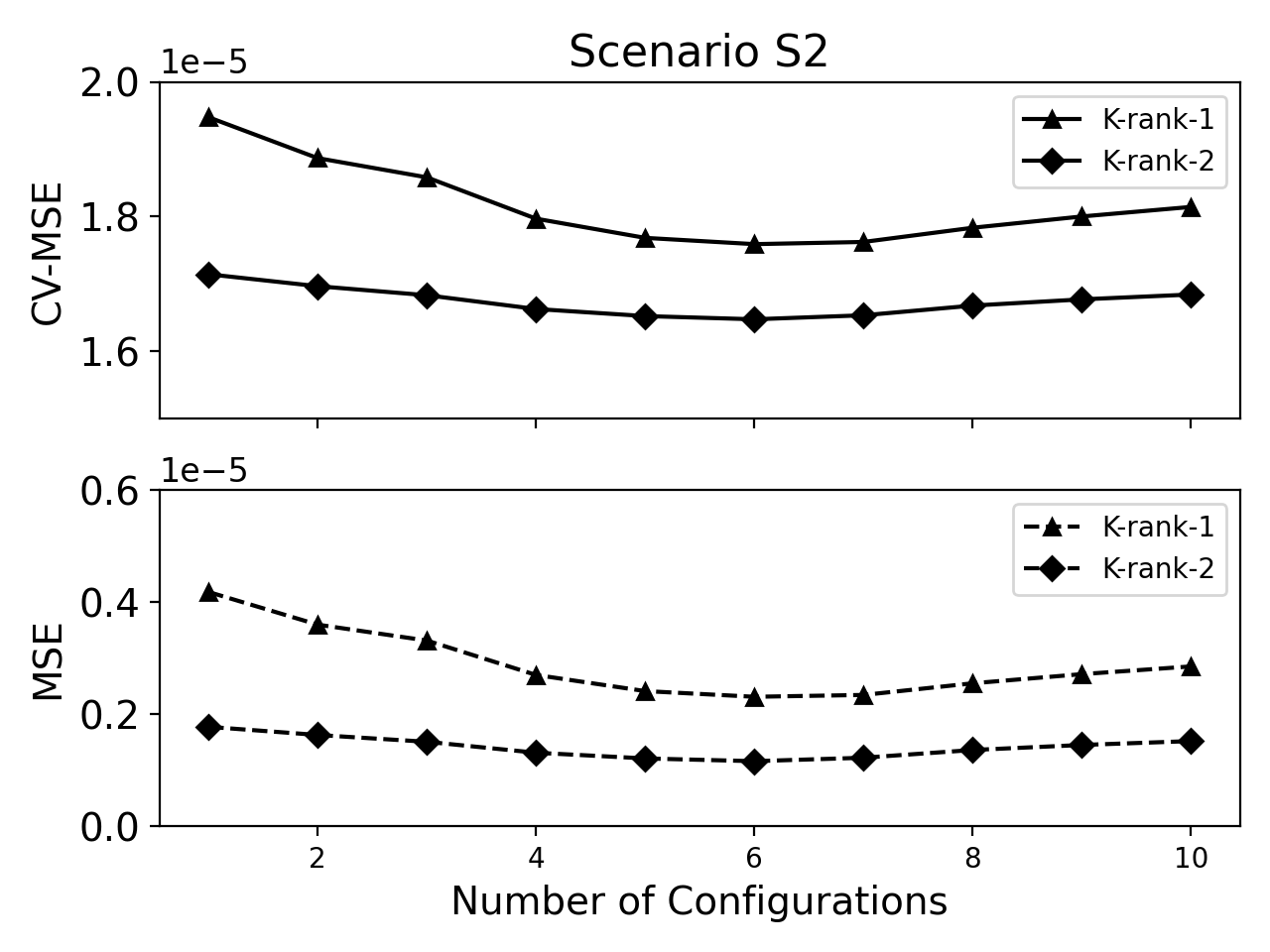}
    \caption{Cross-validated mean squared error (CV-MSE) and mean squared error (MSE) under
four scenarios, using two K-ranks.}
    \label{fig:error_scenario}
\end{figure}

When the representation gap is large, models with wrong configurations usually possess a large bias. The benefit from averaging models under many configurations may not be enough to offset this extra bias. Therefore, in scenarios L1 and L2 (first column of Figure~\ref{fig:error_scenario}), the MSE starts to increase when number of averaged configurations is greater than the true number of Kronecker products $k_0$, except when K-rank-2 models under scenario L1. 
However, when the representation gap is small, there exist several alternative configurations with relatively small bias. Adding those configurations into the averaging model can significantly reduce variance while adding little bias.  Hence, in scenarios S1 and S2 (second column of Figure~\ref{fig:error_scenario}), adding more terms can reduce the MSE. We also observe that in general averaging using K-rank 2 models outperforms averaging K-rank 1 models in S2.

\begin{table}[!tp]
    \centering
    \begin{tabular}{|c|c|c|c|c|c|c|}
         \multicolumn{7}{c}{Scenario L2}\\
         \hline
         \multirow{2}{*}{Term} & \multirow{2}{*}{Config.} & Criterion & \multicolumn{2}{c|}{Error (one term)} & \multicolumn{2}{c|}{Error (agg.)}\\
         \cline{4-7}
         && Function & Rank-1& Rank-2& Rank-1 & Rank-2\\
         \hline
         1 & $(5, 4)$ & 0.2208 & 0.5527 & 0.4939 & 0.5527 & 0.4939\\
         \hline
         2 & $(4, 5)$ & 0.2124 & 0.5547 & 0.4698 & 0.2770 & 0.2520\\
         \hline
         3 & $(5, 5)$ & 0.1773 & 0.7811 & 0.6017 & 0.3792 & 0.2804\\
         \hline
         4 & $(4, 4)$ & 0.1737 & 0.8017 & 0.6226 & 0.4342 & 0.2897\\
         \hline
         5 & $(6, 4)$ & 0.1720 & 0.8294 & 0.6407 & 0.4670 & 0.2934\\
         \hline 
         6 & $(4, 6)$ & 0.1703 & 0.8020 & 0.6470 & 0.4686 & 0.2732 \\
         \hline 
         7 & $(3, 5)$ & 0.1698 & 0.8133 & 0.6491 & 0.4767 & 0.2742\\
         \hline
         8 & $(5, 3)$ & 0.1693 & 0.8303 & 0.6366 & 0.4876 & 0.2693 \\
         \hline
         9 & $(6, 5)$ & 0.1645 & 0.9752 & 0.9893 & 0.5140 & 0.2952 \\
         \hline
         10 & $(5, 6)$ & 0.1606 & 0.9575 & 0.9429 & 0.5327 & 0.3172\\
         \hline
    \end{tabular}
    \caption{
    Configuration-wise MSE and the MSE of averaging over first $k$ configurations are listed for both K-rank 1 and K-rank 2 matrix completion. 
    Error (one term) refers to MSE of $\hat{\bm X}$ obtained in the $k$-th iteration. Error (agg.) refers to the MSE of the aggregated model using the first $k$ configurations. Both are normalized by the factor $\|\bm X\|_F^2/2^{M+N}$.}
    \label{tab:L2}
\end{table}

The ordered configurations, their corresponding MSE, and the MSE of the model aggregated up to it are reported in Table~\ref{tab:L2} for Scenario L2. Under L2, $\bm X$ is the sum of two Kronecker products of configurations $(4, 5)$ and $(5, 4)$, which correspond to the two largest values of the selection criterion, and hence take up the first two rows of Table~\ref{tab:L2}. The gap between the second configuration $(4, 5)$ and the third $(5, 5)$ in terms of the criterion is much greater than the gaps between any other two consecutively selected configurations. As a result, aggregation over only the first two configurations yields the best performance for both K-rank 1 and K-rank 2 matrix completion. 

\begin{table}[!tp]
    \centering
    \begin{tabular}{|c|c|c|c|c|c|c|}
         \multicolumn{7}{c}{Scenario S1}\\
         \hline
         \multirow{2}{*}{Term} & \multirow{2}{*}{Config.} & Criterion & \multicolumn{2}{c|}{Error (one term)} & \multicolumn{2}{c|}{Error (agg.)}\\
         \cline{4-7}
         && Function & Rank-1& Rank-2& Rank-1 & Rank-2\\
         \hline
         1 & $(5, 4)$ & 0.2177 & 0.0868 & 0.2599 & 0.0868 & 0.2599\\
         \hline
         2 & $(6, 4)$ & 0.2163 & 0.1662 & 0.3374 & 0.0931 & 0.1818\\
         \hline 
         3 & $(5, 3)$ & 0.2143 & 0.1597 & 0.2958 & 0.0792 & 0.1418\\
         \hline 
         4 & $(6, 3)$ & 0.2077 & 0.1926 & 0.3200 & 0.0888 & 0.1364\\
         \hline 
         5 & $(6, 5)$ & 0.1799 & 0.7134 & 0.4740 & 0.1088 & 0.1391\\
         \hline 
         6 & $(7, 4)$ & 0.1799 & 0.6758 & 0.4577 & 0.1247 & 0.1414\\
         \hline 
         7 & $(4, 3)$ & 0.1777 & 0.6736 & 0.4636 & 0.1342 & 0.1296\\
         \hline
         8 & $(5, 2)$ & 0.1731 & 0.7377 & 0.4502 & 0.1466 & 0.1234\\
         \hline 
         9 & $(5, 5)$ & 0.1712 & 0.5806 & 0.2337 & 0.1593 & 0.1164\\
         \hline 
         10 & $(4, 4)$ & 0.1711 & 0.5623 & 0.2405 & 0.1666 & 0.1113\\
         \hline
    \end{tabular}
    \caption{Top 10 configurations according to the criterion function under Scenario S1. Error (one term) refers to MSE of $\hat{\bm X}$ obtained in the $k$-th iteration. Error (agg.) refers to the MSE of the aggregated model using the first $k$ configurations. Both are normalized by the factor $\|\bm X\|_F^2/2^{M+N}$. }
    \label{tab:S1}
\end{table}

Similar results for Scenario S1 are reported in Table~\ref{tab:S1}. Due to the small representation gap, configurations $(6, 4)$, $(5, 3)$ and $(6, 3)$ are close to the truth $(5, 4)$ in terms of the the criterion function. Therefore, aggregation over more terms can reduce the MSE. 

The number of terms in the averaging can be determined by a cross-validation procedure as discussed in Section~\ref{sec:averaging}. We conduct a 10-fold cross-validation and report the cross-validated MSE (CV-MSE) based on 100 repetitions in Figure~\ref{fig:error_scenario}. The CV-MSE exhibits patterns similar to MSE and can approximate MSE well (up to an offset) in Figure~\ref{fig:error_scenario}, suggesting that cross-validation can be used in practice to determine the number of terms incorporated in the aggregated recovery. 
For Scenarios S1, L1 and L2, for all 100 repetitions, the number of configurations that minimizes the CV-MSE also minimizes the MSE, while for S2, 92 out of 100 repetitions do. 


As we mentioned earlier, aggregation is an approach that tunes the bias-variance trade-off, which results in a smaller error when (i) the added terms can reduce the bias or (ii) the added terms can reduce variance while only introducing additional bias slightly. In the preceding experiments, if $\bm X$ is a sum of multiple Kronecker products of different configurations, averaging over multiple configurations can reduce the bias but the bias begins to increase when number of averaged models exceeds the number of terms in $\bm X$ as in Scenario L2. On the other hand, if the terms in $\bm X$ have small representation gaps, aggregation can improve the overall performance by reducing the variance without increase the bias significantly as in Scenarios S1 and S2.

\subsection{Simulation: Completely missing block}
In addition to variance reduction, the aggregation over different configurations can also help recovering entries that are unrecoverable under a single configuration. 
In this part of the simulation study, we consider Scenario S1 in Section~\ref{sec:averaging}, and force that the first $2^4\times 2^5$ block is completely unobserved. 
In other words, the index set of observed entries reduces from the original $\Omega$ to $\Omega^B = \Omega\setminus B$, where $B:=\{(i,j): i=1,\cdots, 2^4;j=1,\cdots, 2^5\}$.
We repeat the whole simulation 100 times and record the mean squared error of all the entries of $\bm X$, and of the first block (which is completely missing) of $\bm X$. Specifically, the overall MSE is $2^{-(M+N)}\|\hat{\bm X}-\bm X\|_F^2$ and the MSE of the first block is $2^{-(M+N-m_1-n_1)}\|P_B[\hat{\bm X}] - P_B[\bm X]\|_F^2$, where $P_B$ is the projection operator onto the set $B$. 
The benchmark method fills the missing entries by the average value of observed ones. If there are still irrecoverable entries with K configurations, by default the benchmark method is used to fill in the values.
We report the median of the MSE over 100 repetitions as a function of the number of configurations in Figure~\ref{fig:missing_block}. Median instead of average is reported here for the sake of robustness, since in several repetitions the MSE is tremendously large for the first block. 
Although the first block is irrecoverable under the true configuration $(5, 4)$ (and also irrecoverable under $(m, n)$ with $m \geqslant 5$ and $n\geqslant 4$), Figure~\ref{fig:missing_block} reveals that it can be recovered partially by the aggregated estimation, where the MSE of the first block is typically greater than the overall MSE. 

\begin{figure}[!htb]
    \centering
    \includegraphics[width=0.45\textwidth]{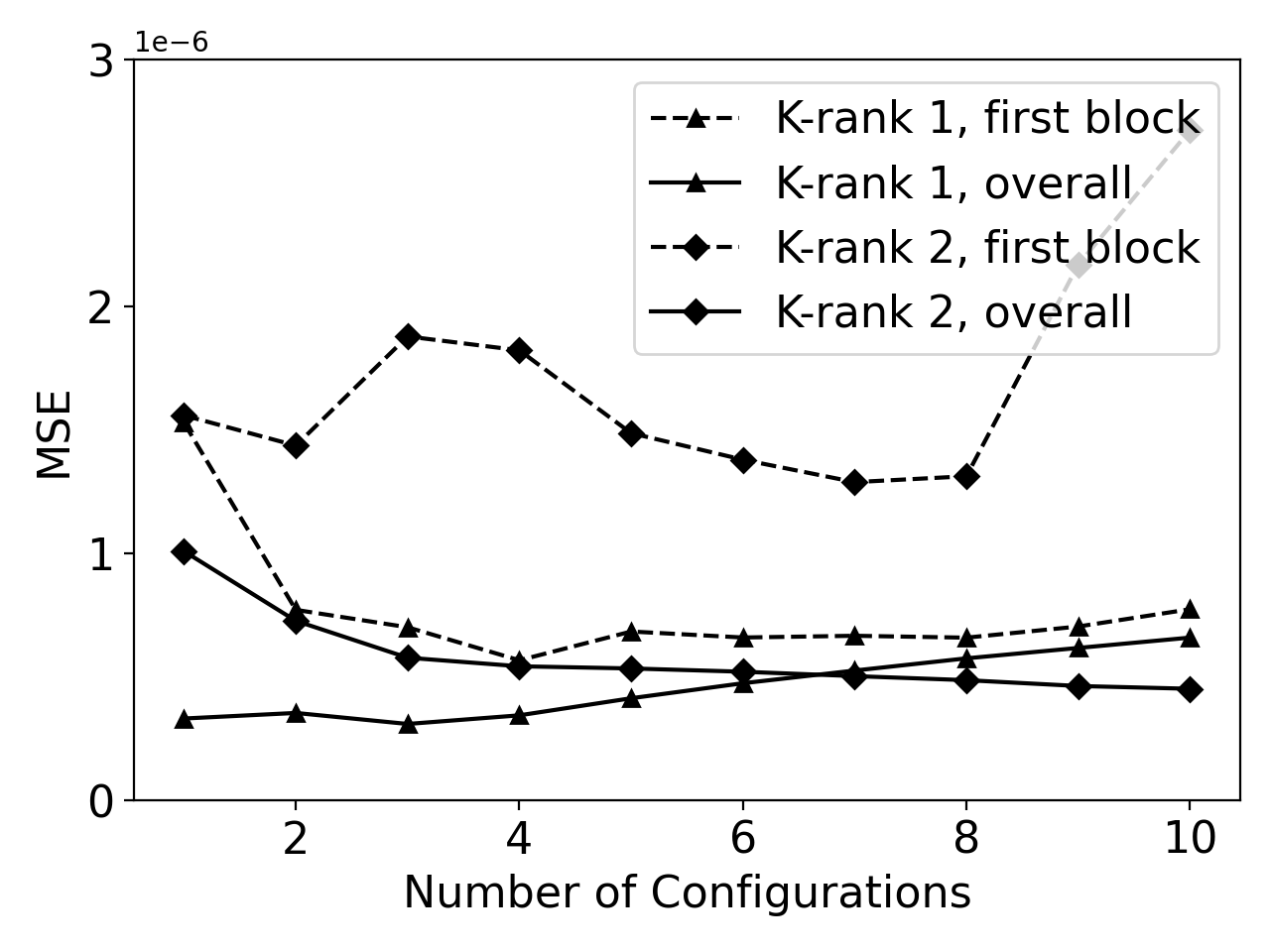}
    \caption{Median of overall MSE and first-block MSE over 100 repetitions as a function of number of configurations for Scenario S1.}
    \label{fig:missing_block}
\end{figure}

\begin{table}[!htb]
    \centering
    \begin{tabular}{|c|c|c|c|c|c|c|}
    \hline
    \multirow{2}{*}{Term} & \multirow{2}{*}{Config.} & Criterion & \multicolumn{2}{c|}{MSE (overall)} & \multicolumn{2}{c|}{MSE (1st block)}\\
    \cline{4-7}
    & & Function & Rank-1 & Rank-2 & Rank-1 & Rank-2\\
    \hline
    1 &    (5, 4) & 0.2167 & 0.0914 & 0.2645 & 0.9986 & 0.9986 \\
    \hline
    2 &    (6, 4) & 0.2153 & 0.0977 & 0.1868 & 0.9986 & 0.9986 \\
    \hline
    3 &    (5, 3) & 0.2131 & 0.0795  & 0.1423 & 0.1191  & 0.1438 \\
    \hline
    4 &    (6, 3) & 0.2066 & 0.0889  & 0.1372 & 0.0847  & 0.1403 \\
    \hline
    5 &    (7, 4) & 0.1793 & 0.1062  & 0.1391 & 0.0847  & 0.1403 \\
    \hline
    6 &    (6, 5) & 0.1793 & 0.1248  & 0.1420 & 0.0847  & 0.1403 \\
    \hline
    7 &    (4, 3) & 0.1764 & 0.1344  & 0.1302 & 0.0710  & 0.1200 \\
    \hline
    8 &    (5, 2) & 0.1721 & 0.1472  & 0.1239 & 0.1204  & 0.1234 \\
    \hline
    9 &    (5, 5) & 0.1706 & 0.1598  & 0.1168 & 0.1204  & 0.1234 \\
    \hline
    10 &    (4, 4) & 0.1702 & 0.1685  & 0.1115 & 0.4567  & 0.0900 \\
    \hline
    \multicolumn{3}{|c|}{Benchmark (Fill in mean)} & \multicolumn{2}{c|}{1.599} & \multicolumn{2}{c|}{0.9986}\\
    \hline
    \end{tabular}
    \caption{Top 10 configurations according to the criterion function under Scenario S1 with a completely missing block under true configuration. the overall MSE and the one for the first block are normalized by $\|\bm X\|_F^2/2^{M+N}$ and $\|P_B\bm X\|_F^2/2^{(M+N-m_1-n_1)}$ accordingly.}
    \label{tab:missing_block}
\end{table}

We pick one repetition and report the top 10 configurations according to the criterion function, and the corresponding MSE up to the given configuration in Table~\ref{tab:missing_block}. 
The performance of the benchmark method is also reported. 
The top 10 configurations are the same as those in Table~\ref{tab:S1}, but the values of the criterion function are slightly smaller due to the extra missing block. The first block is irrecoverable under the top 2 configurations and therefore a benchmark method is used to fill in the first block, resulting in a large MSE for the first two configurations.
Starting from the third row, when aggregating over more than three configurations, the first block is recoverable, with a significantly smaller MSE. 


\subsection{Real image example}
In this section, we apply the proposed matrix completion approach to the cameraman's image, which has been widely used as a benchmark in image analysis.
The original image, which is a $512\times 512$ gray-scaled picture, is shown in Figure \ref{fig:cameraman} (left panel).
It is represented by a $512\times 512$ matrix $\bm X$ of real numbers between 0 and 1, 0 for black and 1 for white.

We first add a noise to the original image such that
$\bm Y = \bm X +0.1 \bm E,$
where the entries of $\bm E$ are IID standard Gaussian noises.
The corrupted image $\bm Y$ is shown in the middle panel
of Figure \ref{fig:cameraman}. We set the observing rate to 20\% ($\tau=0.2$) and
sample the observing set $\Omega$ as i.i.d Bernoulli($\tau$).
The observed $\bm Y^*=P_\Omega \bm Y$ is plotted in the
right panel of Figure \ref{fig:cameraman}, where missing entries are filled as white.
\begin{figure}[tb]
    \vskip 0.2in
    \centering
    \includegraphics[width=0.32\columnwidth]{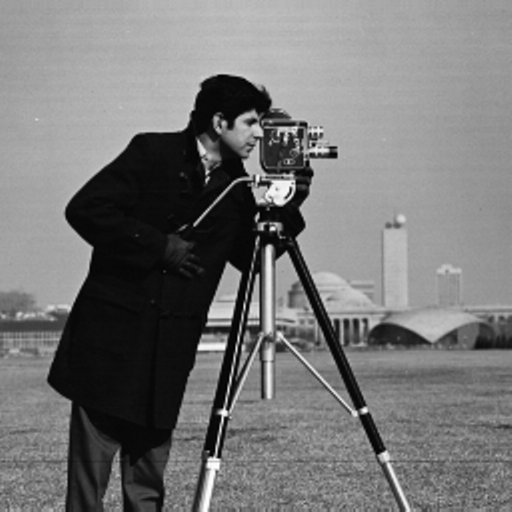}
    \includegraphics[width=0.32\columnwidth]{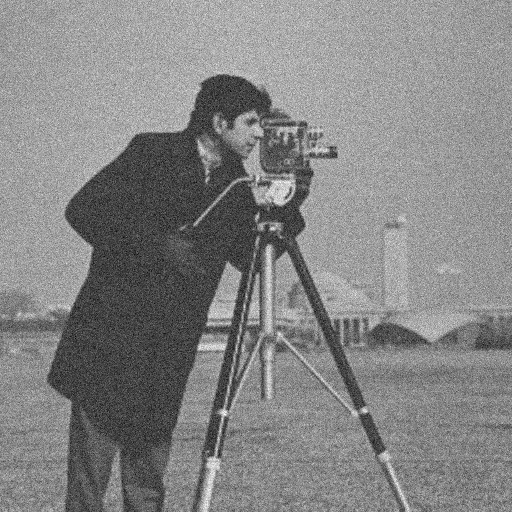}
    \includegraphics[width=0.32\columnwidth]{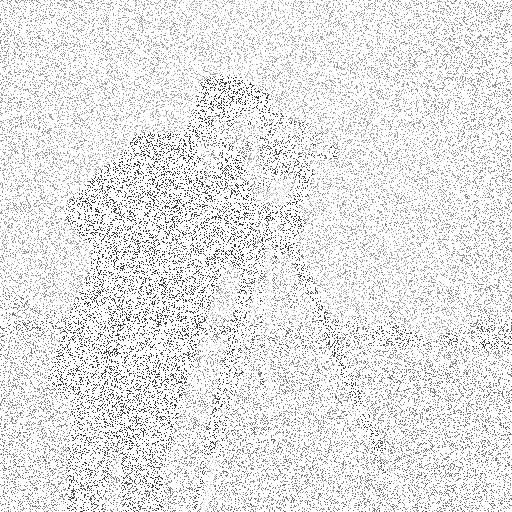}
    \caption{(Left) Cameraman's image; (Middle) Cameraman's image with noise; (Right) Noisy image with 20\% observed entries.}
    \label{fig:cameraman}
\end{figure}
We follow the configuration determination procedure proposed in Section \ref{sec:method-config}. 
The maximum of the criteria function $\|\mathcal R_{p,q}[P_\Omega \bm Y]\|_S$ is attained at the configuration $(\hat p, \hat q) = (64, 64)$, within the candidate configuration set $\mathcal C_6$ (defined in Section~\ref{sec:simulation}). It corresponds to decomposing $\bm X$ as the Kronecker product of a $64\times 64$ and a $8\times 8$ matrix.
Recovered images under the configuration (64,64) and K-ranks 1 to 3 are shown
in the upper row of Figure \ref{fig:recovered}. The cameraman can be recognized from
the recovered matrix of K-rank 1 and more details are added as
the K-rank increases to 2 and 3.

We also compare the performance of matrix completion through KPD with the classical approach through SVD. 
In particular, we consider matrix completion via the alternating minimization algorithm with ranks 8, 16 and 24, matching the numbers of parameters under K-ranks 1, 2, 3 of the Kronecker matrix completion with configuration $(64, 64)$. These recovered images are shown in the lower row of Figure \ref{fig:recovered}. 
The superiority of the KPD approach is easily seen from the images.
Besides judging the recovered images by eyesight,
we quantify the quality of a recovered matrix by the {\it reconstruction error}
\begin{equation}\label{eq:recon_error}
    \|\bm X-\hat{\bm X}\|_F^2/\|\bm X\|_F^2,
\end{equation}
where $\bm X$ is the image without noise and $\hat{\bm X}$ is the recovered matrix. 
Table \ref{tab:lenna-comparison} reports the reconstruction errors of recovered matrices through KPD and SVD. The trimmed error is for the trimmed recovered matrix, whose pixel values are restricted to $[0, 1]$. It confirms again that Kronecker matrix completion can recover the cameraman's image more accurately compared to the SVD approach, with a similar number of parameters. The result is anticipated since KPD matrix completion has more flexibility in selecting the configurations, including SVD matrix completion as one of its special cases. The proposed configuration determination procedure is able to find a better configuration, which provides better performance than
one of its special cases. 
\begin{table}[!tb]
    \centering
    \begin{tabular}{|c|c|c|c|}
    \hline
    KPD rank & 1 & 2 & 3  \\
    \hline
    Error & 0.1224 & 0.1084 & 0.1136\\
    Error (trimmed) & 0.1219 & 0.1060 & 0.1071\\
    \hline
    SVD rank & 8 & 16 & 24\\
    \hline
    Error & 0.2167 & 0.4986 & 0.9968\\
    Error (trimmed) & 0.2060 & 0.3464 & 0.5988\\
    \hline
    \end{tabular}
    \caption{Error for Kronecker matrix completion and classical matrix completion with similar number of parameters.}
    \label{tab:lenna-comparison}
\end{table}
\begin{figure}[!tb]
    \centering
    \includegraphics[width=0.3\columnwidth]{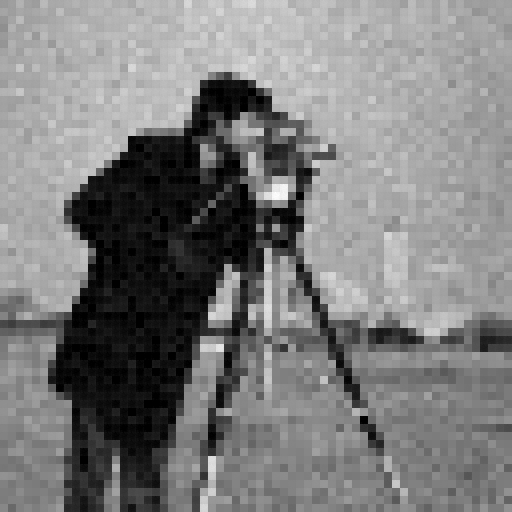}
    \includegraphics[width=0.3\columnwidth]{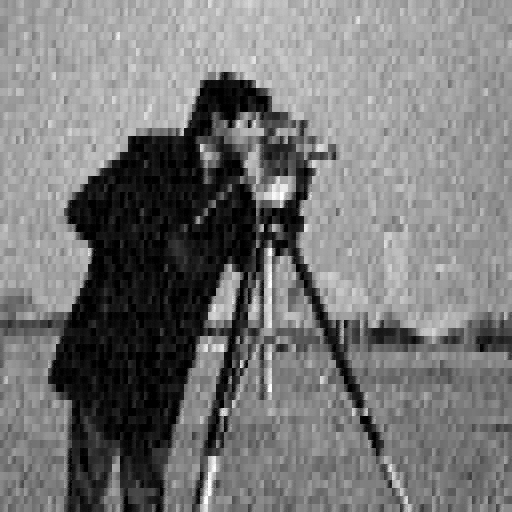}
    \includegraphics[width=0.3\columnwidth]{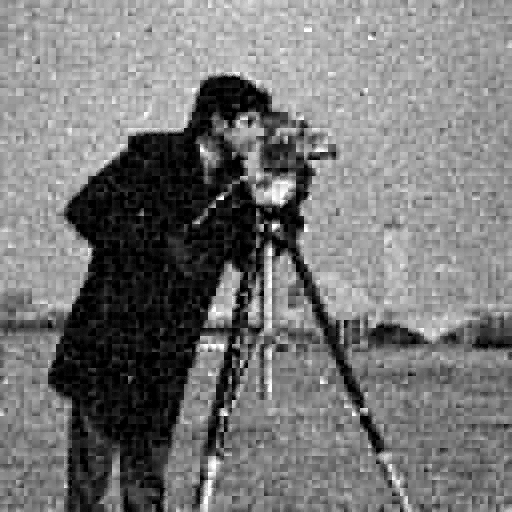}\\
    \includegraphics[width=0.3\columnwidth]{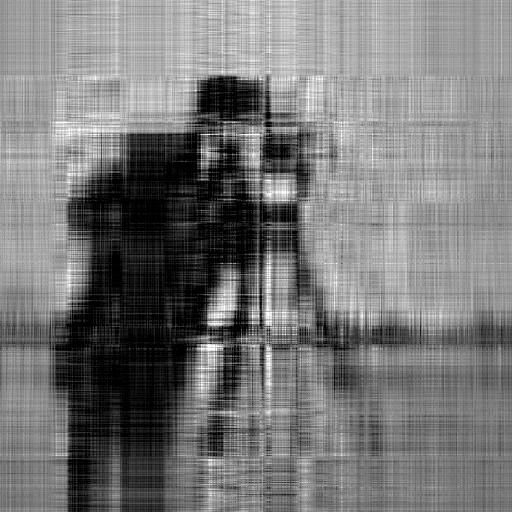}
    \includegraphics[width=0.3\columnwidth]{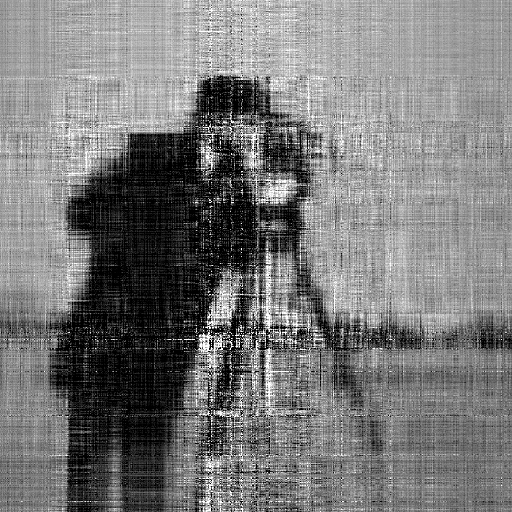}
    \includegraphics[width=0.3\columnwidth]{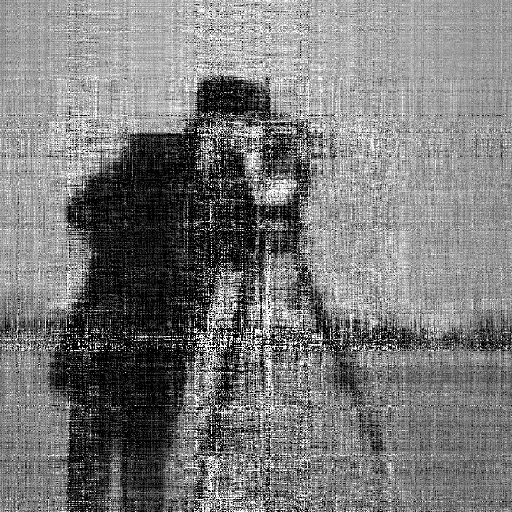}\\
    \caption{(Upper row) Recovered images using KPD (Lower row) Recovered images using SVD. The extreme values are trimmed such that all pixel values are between 0 and 1.}
    \label{fig:recovered}
\end{figure}

To examine the performance of the aggregation approach, we consider three restricted candidate sets 
$\mathcal C_5, \mathcal C_6, \mathcal C_7$, defined in Section~\ref{sec:simulation}, corresponding to $pq\wedge p^*q^*\geqslant 32, 64, 128$, respectively. 
We also consider both K-ranks 1 and 2 for each configuration. The configurations within the candidate sets are ordered based on the criterion function. 
Then $\hat{\bm X}$ is obtained by a simple average of the matrices recovered under the first $d$ configurations, according to \eqref{eq: averaging}. The left sub-figure of Figure~\ref{fig: error_average} shows the reconstruction error against $d$. The errors corresponding to K-rank 2 for the candidate set $\mathcal C_5$ is much worse than all other scenarios and is therefore not shown. 

The right sub-figure of Figure~\ref{fig: error_average} reports the cross-validation errors. We choose to use $K=20$-fold to minimize the impact of decreased observing rate. Although CV-MSE and the reconstruction error have different scales (the former is calculated according to \eqref{eq:cv-mse} where the error $\bm E$ is also involved, the latter does not involve $\bm E$ directly and is normalized by $\|\bm X\|_F^2$ as in \eqref{eq:recon_error}), CV-MSE nevertheless exhibits the same trend as the reconstruction error and the number of configurations that minimize CV-MSE coincides with the one that minimize the reconstruction error for each scenario. Similar to the simulation studies, the CV-MSE can work as a proxy of the reconstruction error, which is usually infeasible in practice. In addition, CV-MSE helps not only in determining the number of configurations with respect to 
a certain configuration set $\mathcal C$, but also in choosing the appropriate fitting K-rank and choosing the best configuration set $\mathcal C$. 

We note that 
determining the best configuration set is not trivial. On the one hand, as discussed in Section~\ref{sec:analysis}, extreme configurations ($pq < (PQ)^{1/4}$ or $p^*q^*<(PQ)^{1/4}$) must be excluded to have a stable Kronecker product matrix completion. On the other hand, configurations close to these boundaries ($pq = (PQ)^{1/4}$ or $p^*q^*=(PQ)^{1/4}$) may have uncontrollable performance 
and slower convergence rates. 
Fitting a higher K-rank matrix completion model with those configurations can potentially suffer from sever overfitting. 
For example, if only K-rank 1 models are considered, $\mathcal C_5$ performs the best. However, fitting K-rank 2 models with respect to $\mathcal C_5$ results in a tremendously large error in some unobserved entries and is therefore not reported in Figure~\ref{fig: error_average}.


It can be seen from Figure~\ref{fig: error_average} that using K-rank 2 and candidate set $\mathcal C_6$ performs the best. ($\mathcal C_5$ with K-rank 2 is severly overfitted as discussed earlier) 
The error of averaging the top 5 configurations is around 0.0790, which is about 20\% smaller than 0.1084, obtained by the K-rank two matrix completion under a single configuration, as reported in Table~\ref{tab:lenna-comparison}. The error rate then stays roughly constant if more configurations are added into the averaging. 



The reconstructed images averaged over 4 configurations with K-rank 1 under the candidate sets 
$\mathcal C_5$ and over 5 configurations of K-rank 2 under $\mathcal C_6$  are shown in Figure~\ref{fig:averge_image}. With K-rank 1 (left panel), big pixels are observed in the reconstructed image but are less noisy than the ones in Figure~\ref{fig:recovered}. With K-rank 2 (right panel), more details are added, resulting in a smoother reconstructed image.
\begin{figure}[!htb]
    \centering
    \includegraphics[width=0.4\columnwidth]{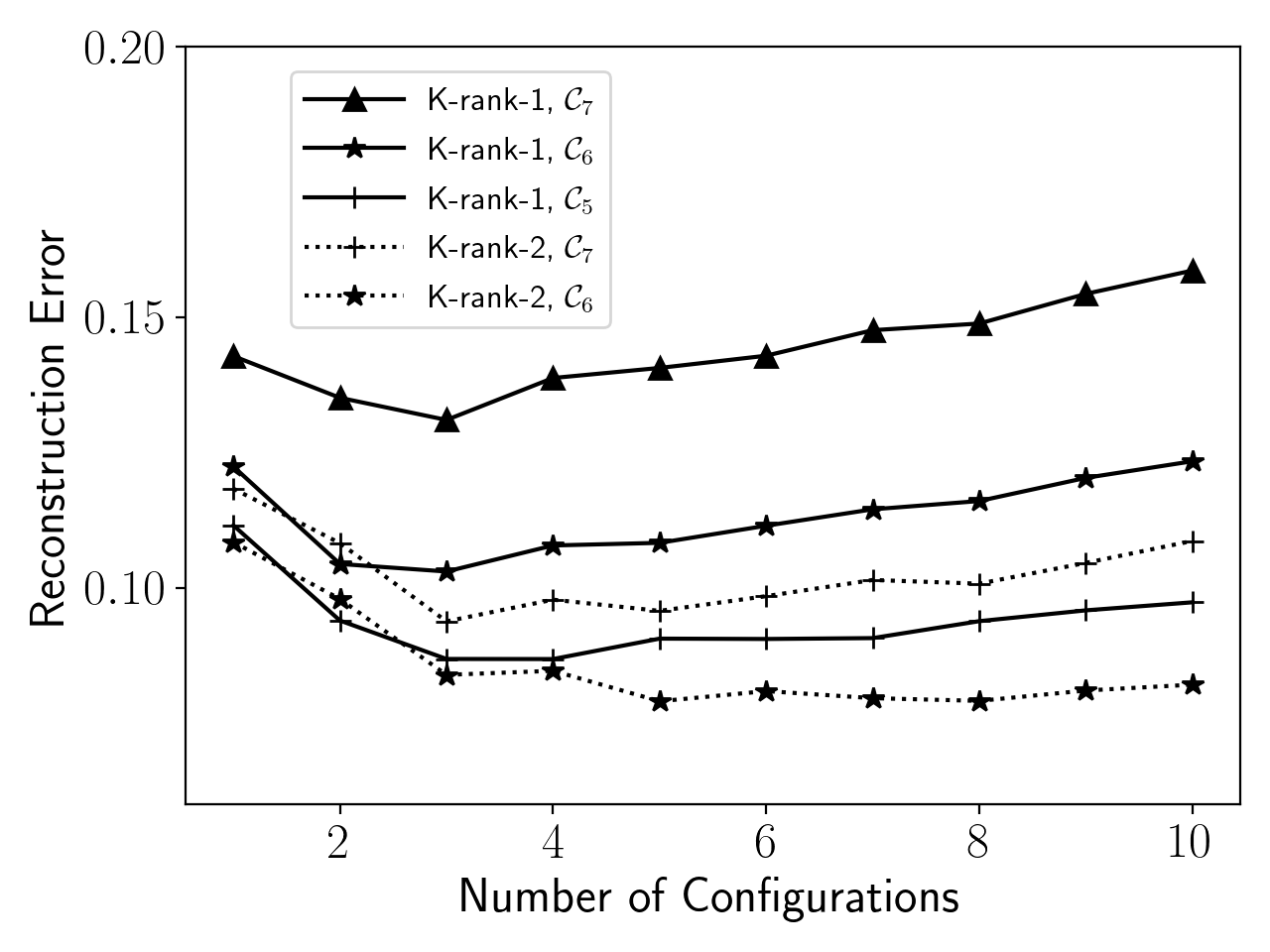}
    \includegraphics[width=0.4\columnwidth]{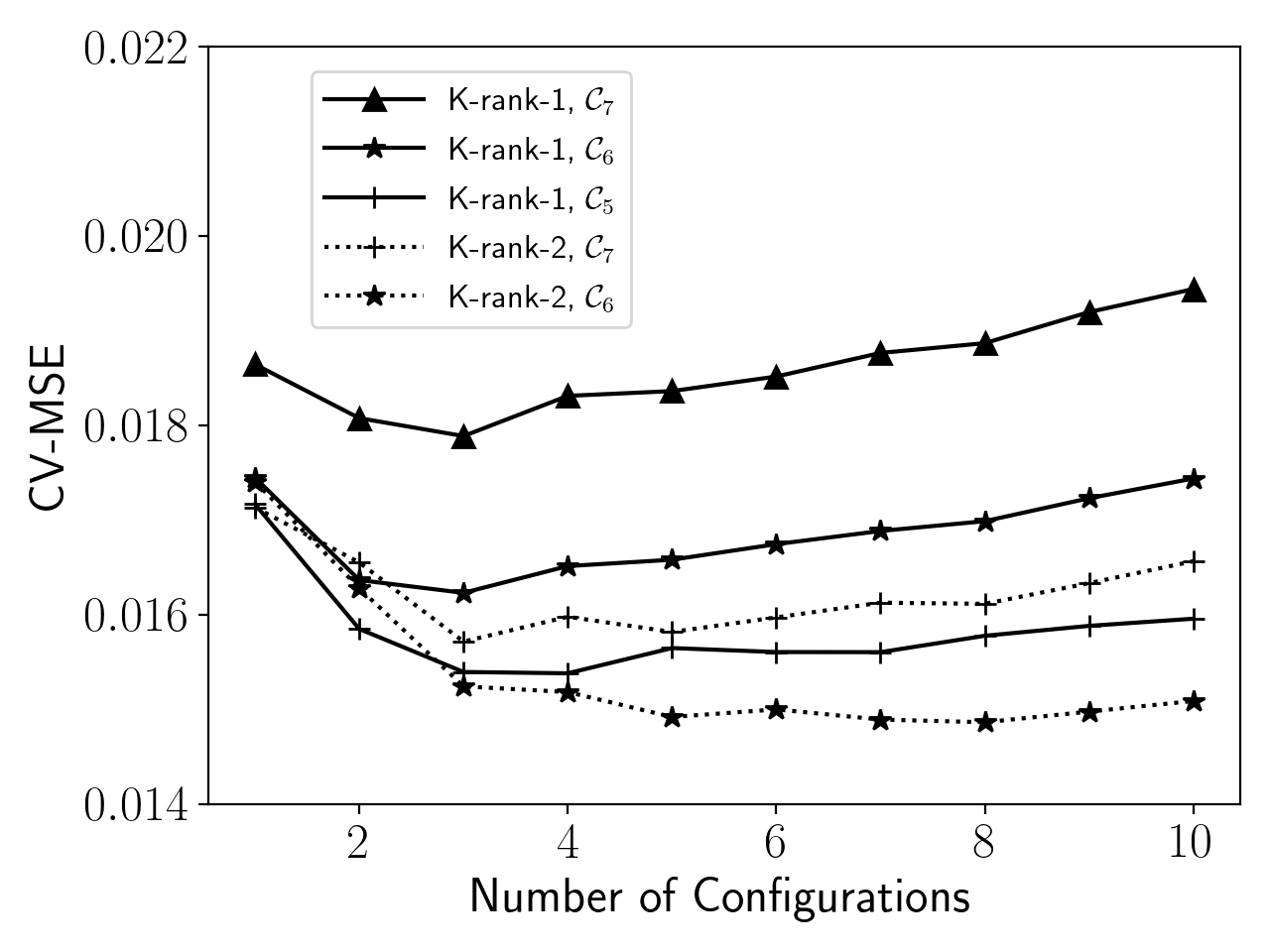}
    \caption{(Left) Reconstruction error and (Right) cross-validation error (CV-MSE) of the averaged matrix against the number of configurations.}
    \label{fig: error_average}
\end{figure}
\begin{figure}[!htb]
    \centering
    \includegraphics[width=0.4\columnwidth]{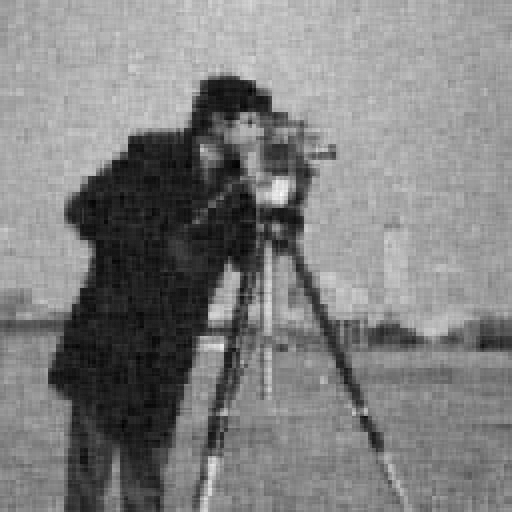}\hskip 0.05\columnwidth
    \includegraphics[width=0.4\columnwidth]{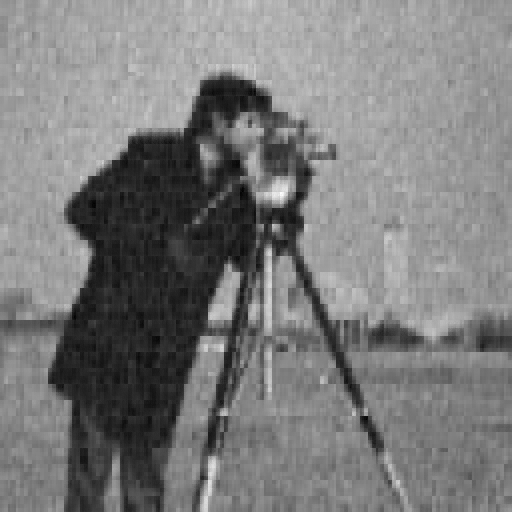}
    \caption{(Left) Average reconstructed image over 4 configurations of K-rank 1. (Right) Average reconstructed image over 5 configurations of K-rank 2. }
    \label{fig:averge_image}
\end{figure}

\section{Conclusion}\label{sec:conclusion}
In this article, we study matrix completion problem assuming the underlying complete signal matrix is of a low 
rank Kronecker product form, extending the classical assumption of low rank signal matrix. The new model includes the low rank model
as a special case. Such an extension brings a significantly greater modeling flexibility which leads to more effective dimension reduction, a flexible mechanism for aggregated recovery,
and a significant reduction in the number of unrecoverable entries when the observing rate is very low.
It also allows a wide range of applications in signal processing, image analysis and many other fields, where data often appear to have
the Kronecker product structure, 
which the classical low rank matrix completion methods may not be able to handle effectively.
We also propose a MSE based criterion to determine the unknown configuration of the underlying Kronecker product. 

There are a number of directions to further extend and explore the capability of the Kronecker product based matrix completions. First, it is of interest to devise a procedure/algorithm for the joint selection of the configuration and the K-rank. 
Second, to demonstrate the potential advantage of the aggregated estimation, we have used the equal weight of different configurations. A better aggregation can possibly be achieved by using weighted average, where the weights reflect the accuracy of different models, and can be determined by the cross validation. Last but not least, Similar structure can be introduced for tensors, leading to new tensor completion methods. We anticipate that the algorithm and the analysis will be substantially different, because it is known that tensor decomposition is very different from the matrix decomposition. We plan to follow out these ideas in the future work.


\bibliography{reference}
\bibliographystyle{apalike}

\newpage
\appendix 

\centerline{\bf \Large Appendix}

\bigskip

We make the convention that $C, C_1, C_2, \ldots$ denote absolute constants, whose values may change from place to place.

\section{Proof of Theorem~\ref{thm:gap}}
We first prove several technical lemmas. 

\begin{lem}[Over-representation]\label{lem:over-representation}
Let $\bm Q$ be a $m\times n$ matrix with IID Bernoulli random variables with success rate $\tau$. Define
$$\mathcal A = \left\{i\in[m]:\sum_{j=1}^n Q_{ij} < 2\tau n\right\}.$$
Then we have \
$$P[|\mathcal A|=m]\geqslant 1- \exp\left\{\log m - \dfrac{3n\tau}{8(1-\tau)}\right\}.$$
\end{lem}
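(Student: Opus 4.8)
The plan is to recognize that the event $\{|\mathcal A| = m\}$ is precisely the event that every row sum $S_i := \sum_{j=1}^n Q_{ij}$ stays below the threshold $2\tau n$, so its complement is the event that at least one row sum reaches or exceeds $2\tau n$. I would therefore first pass to the complement and control it by a union bound over the $m$ rows, using that the rows are identically distributed:
\[
P[|\mathcal A| < m] = P\Bigl[\bigcup_{i=1}^m \{S_i \geq 2\tau n\}\Bigr] \leq \sum_{i=1}^m P[S_i \geq 2\tau n] = m\, P[S_1 \geq 2\tau n].
\]

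The heart of the argument is then a sharp tail bound on a single row sum. Each $S_1$ is a sum of $n$ independent Bernoulli$(\tau)$ variables with mean $\mu = n\tau$, variance $v = n\tau(1-\tau)$, and centered summands bounded by $M = 1-\tau$. Since the target threshold $2\tau n = 2\mu$ corresponds to a deviation $t = \mu = n\tau$ above the mean, I would invoke Bernstein's inequality in the form
\[
P[S_1 - \mu \geq t] \leq \exp\Bigl(-\frac{t^2/2}{v + Mt/3}\Bigr).
\]
Substituting $t = n\tau$, $v = n\tau(1-\tau)$ and $M = 1-\tau$, the denominator collapses to $\tfrac{4}{3}n\tau(1-\tau)$ and the exponent simplifies exactly to $3n\tau/\bigl(8(1-\tau)\bigr)$, giving $P[S_1 \geq 2\tau n] \leq \exp\{-3n\tau/(8(1-\tau))\}$.

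Combining this tail bound with the union bound yields
\[
P[|\mathcal A| < m] \leq m\exp\Bigl(-\frac{3n\tau}{8(1-\tau)}\Bigr) = \exp\Bigl(\log m - \frac{3n\tau}{8(1-\tau)}\Bigr),
\]
and taking complements gives the claimed lower bound on $P[|\mathcal A| = m]$. The only genuinely delicate point is matching the stated constant $3/\bigl(8(1-\tau)\bigr)$: this is what dictates using Bernstein's inequality, which retains the variance proxy $\tau(1-\tau)$ and hence produces the $(1-\tau)$ in the denominator, rather than the cruder multiplicative Chernoff bound $P[S \geq 2\mu] \leq e^{-\mu/3}$, which would only yield $\exp(-n\tau/3)$ and lose the $(1-\tau)$ factor entirely. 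Everything else is a routine substitution, so I expect no substantive obstacle beyond selecting the correct concentration inequality and verifying the arithmetic of the exponent.
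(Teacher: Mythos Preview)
Your proposal is correct and follows essentially the same approach as the paper: apply Bernstein's inequality to a single row sum (with $t=n\tau$, variance $n\tau(1-\tau)$, and bound $M=1-\tau$) to get the exponent $3n\tau/(8(1-\tau))$, then take a union bound over the $m$ rows. The only cosmetic difference is that the paper first disposes of the case $\tau>1/2$ (where $2\tau n>n\geq S_i$ makes $|\mathcal A|=m$ trivially), which is what justifies taking $|Z_{ij}|\leq 1-\tau$ rather than $\max(\tau,1-\tau)$; you implicitly rely on either that observation or the one-sided form of Bernstein.
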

\begin{proof}
If $\tau > 1/2$, then $|\mathcal A|=m$ has probability one. Therefore, we only consider $\tau \leqslant 1/2$. Let $Z_{ij} = Q_{ij}-\tau$. 
We have $\mathbb E[Z_{ij}] = 0$, $\mathrm{Var}[Z_{ij}] = \tau(1-\tau)$ and $|Z_{ij}| \leqslant 1-\tau$. By Bernstein's inequality, we have
$$P\left[\sum_{j=1}^n{Z_{ij}} \geqslant t\right]\leqslant \exp\left\{-\dfrac{t^2/2}{n\tau(1-\tau) + (1-\tau)t/3}\right\}.$$
Therefore
$$P\left[\sum_{j=1}^n{Q_{ij}} \geqslant 2\tau n\right]\leqslant \exp\left\{-\dfrac{3n\tau}{8(1-\tau)}\right\}.$$
Using union bound, we have
$$P\left[\max_{i\in[m]}\sum_{j=1}^n{Q_{ij}} \geqslant 2\tau n\right]\leqslant \exp\left\{\ln m-\dfrac{3n\tau}{8(1-\tau)}\right\}.$$
\end{proof}
The matrix $\bm Q$ is \textit{row over-represented} if $|\mathcal A|< m$. Similarly, a matrix $P_\Omega\bm M$ is over-represented if either $P_\Omega\bm M$ is row over-represented or $[P_\Omega\bm M]^T$ is row over-represented. We restate the theorem on $\|P_\Omega \bm M\|_S$ from \citet{keshavan2010matrixcompletion} in Lemma~\ref{lem:restate-bound}. Recall that $\|\cdot\|_{\max}$ denotes the maximum absolute entry of a matrix.

\begin{lem}[\citet{keshavan2010matrixcompletion}]\label{lem:restate-bound}
Suppose $\bm M$ is a $m\times n$ ($m\geqslant n$) matrix with elements observed IID with probability $\tau$. Denote the partially observed matrix as $P_\Omega\bm M$. If $P_\Omega\bm M$ is not over-represented, then there exists a constant $C>0$ such that, with probability larger than $1-1/n^3$
$$\dfrac{\left |\|P_\Omega \bm M\|_S - \tau \|\bm M\|_S\right|}{\sqrt{\tau mn}\|\bm M\|_{\max}}\leqslant Cm^{1/4}n^{-3/4}.$$
\end{lem}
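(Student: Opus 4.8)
The plan is to strip off the deterministic part and reduce everything to a spectral-norm bound for the zero-mean sampling fluctuation, then control that fluctuation by a Feige--Ofek type light/heavy decomposition in which the over-representation hypothesis plays the decisive role. Write $\bm W = P_\Omega\bm M - \tau\bm M$; its entries $W_{ij} = (\delta_{ij}-\tau)M_{ij}$ are independent and mean zero, bounded by $\|\bm M\|_{\max}$ in absolute value, with variance $\tau(1-\tau)M_{ij}^2 \le \tau\|\bm M\|_{\max}^2$. Since $\mathbb E[P_\Omega\bm M]=\tau\bm M$ and the spectral norm obeys the reverse triangle inequality,
$$\bigl|\,\|P_\Omega\bm M\|_S-\tau\|\bm M\|_S\,\bigr| = \bigl|\,\|\tau\bm M+\bm W\|_S-\|\tau\bm M\|_S\,\bigr| \le \|\bm W\|_S,$$
so it suffices to prove $\|\bm W\|_S \le C\sqrt{\tau}\,\|\bm M\|_{\max}\,m^{3/4}n^{-1/4}$ with probability at least $1-1/n^3$.

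Next I would pass to the variational form $\|\bm W\|_S=\max_{\|x\|=\|y\|=1}x^\top\bm W y$ and discretize both unit spheres by $\tfrac14$-nets of sizes at most $9^m$ and $9^n$, so that $\|\bm W\|_S\le 2\max_{x,y}x^\top\bm W y$ over the nets. For a fixed pair, $x^\top\bm W y=\sum_{ij}x_iy_jW_{ij}$ is a sum of independent bounded terms with total variance at most $\tau\|\bm M\|_{\max}^2$, to which Bernstein applies. The difficulty is that a crude union bound over the $9^{m+n}$ net points is governed by the sub-exponential (linear) term and yields only the useless estimate $\|\bm W\|_S\lesssim\|\bm M\|_{\max}\,m$; even the matrix Bernstein inequality leaves a spurious $\sqrt{\log(m+n)}$ factor. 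The remedy is to split the index couples into \emph{light} ones, where $|x_iy_j|$ lies below a suitable threshold, and \emph{heavy} ones above it. On the light couples the per-term magnitude is small enough that Bernstein together with the net union bound delivers the target order $\sqrt{\tau}\,\|\bm M\|_{\max}\,m^{3/4}n^{-1/4}$, with failure probability fitting inside the $1/n^3$ budget.

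The main obstacle is the heavy couples, and this is exactly where non-over-representation is used. Since $\sum_i x_i^2=\sum_j y_j^2=1$, only a few couples can be heavy, but their joint contribution $\sum_{\mathrm{heavy}}x_iy_jW_{ij}$ is not controlled by variance; it must instead be bounded through a discrepancy estimate on the bipartite sampling graph $\Omega$, i.e.\ a uniform bound on the number of observed entries inside an arbitrary submatrix. The assumption that $P_\Omega\bm M$ is not over-represented caps every row and column sum of $\Omega$ at $2\tau n$ and $2\tau m$ respectively, which is precisely the bounded-degree input that makes this discrepancy bound hold and removes the spurious factors of the naive argument. Combining the light and heavy estimates, taking the union bound over the nets, and normalizing by $\sqrt{\tau mn}\,\|\bm M\|_{\max}$ gives the stated inequality; the detailed combinatorial bookkeeping that produces the exact exponent $m^{1/4}n^{-3/4}$ is carried out in \citet{keshavan2010matrixcompletion}, to which I would ultimately defer.
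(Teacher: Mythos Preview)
The paper does not give its own proof of this lemma: it is stated with the attribution \citet{keshavan2010matrixcompletion} and used as a black box. Your proposal correctly sketches the Feige--Ofek style light/heavy decomposition underlying that reference, identifies where the non-over-representation hypothesis enters, and ultimately defers to the same source, so you are in full agreement with the paper's treatment.
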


\begin{lem}[Max bound on Gaussian matrix]\label{lem:gaussian-max}
If $\bm E$ is a $m\times n$ matrix with IID standard Gaussian random variables, then there exists a constant $C >0 $ such that, with probability larger than $1-1/(mn)^3$,
$$\|\bm E\|_{\max} \leqslant C\sqrt{\log mn}.$$
\end{lem}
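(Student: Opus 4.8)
The plan is to combine the standard Gaussian tail bound with a union bound over the $mn$ entries of $\bm E$. First I would recall that for a single standard Gaussian variable $Z$, a Chernoff/Markov argument on the moment generating function gives $P[Z \geqslant t] \leqslant \exp(-t^2/2)$ for all $t > 0$, and hence by symmetry the two-sided bound $P[|Z| \geqslant t] \leqslant 2\exp(-t^2/2)$. This requires nothing beyond normality of the entries.

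Next, since $\|\bm E\|_{\max} = \max_{i,j} |E_{ij}|$ is the maximum of $mn$ independent copies of $|Z|$, a union bound yields
$$P[\|\bm E\|_{\max} \geqslant t] \leqslant \sum_{i,j} P[|E_{ij}| \geqslant t] \leqslant 2mn\exp(-t^2/2)$$
for every $t > 0$. This is the only probabilistic step, and independence of the entries is used solely through the subadditivity of probability, so even the independence hypothesis is not essential here.

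Finally I would set $t = C\sqrt{\log mn}$, so that $\exp(-t^2/2) = (mn)^{-C^2/2}$ and the right-hand side becomes $2(mn)^{1 - C^2/2}$. Choosing $C$ large enough that $C^2/2 \geqslant 4$ (e.g. any $C \geqslant 3$) makes this at most $(mn)^{-3}$ once $mn$ exceeds a small absolute threshold, which is exactly the asserted tail bound $P[\|\bm E\|_{\max} \leqslant C\sqrt{\log mn}] \geqslant 1 - (mn)^{-3}$. For the finitely many small values of $mn$ below that threshold, the inequality can be enforced directly by enlarging $C$; since the lemma only claims the existence of \emph{some} constant, this causes no difficulty.

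I do not anticipate a genuine obstacle: the argument is a textbook maximal inequality for sub-Gaussian variables, and the only care needed is bookkeeping on the constant $C$ so that the leading factor $2mn$ is absorbed into the $(mn)^{-3}$ tail. Conceptually the statement is just a quantitative restatement of the well-known fact that the maximum of $N$ IID standard Gaussians is of order $\sqrt{2\log N}$.
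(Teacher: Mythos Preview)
Your argument is correct and matches the paper's proof essentially line for line: the paper also applies the two-sided Gaussian tail bound $P[|E_{ij}|>t]\leqslant 2\exp\{-t^2/2\}$, takes a union bound over the $mn$ entries, and then chooses $C>2\sqrt{2}$ to absorb the $2mn$ prefactor into the $(mn)^{-3}$ tail. Your treatment of the constant (including the remark on small $mn$) is in fact slightly more careful than the paper's.
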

\begin{proof}
Since $E_{ij}$ is standard Gaussian, we have $P[|E_{ij}|>t]\leqslant 2\exp\{-t^2/2\}$. Using union bound, we have
$$P[\|\bm E\|_{\max} > t] \leqslant \exp\{\log (2mn) - t^2/2\}. $$
The lemma follows immediately by choosing $C > 2\sqrt{2}$. 
\end{proof}

From Lemma~\ref{lem:over-representation}, we have that for any configuration $(p ,q)\in\mathcal C_\delta$, 
\begin{align*}
P[\mathcal R_{p, q}[P_\Omega\bm Y] \text{ is over-represented}]&\leqslant 2\exp\left\{\left(\dfrac{3}{4}-\delta\right)\log PQ - \dfrac{3\tau}{8(1-\tau)}(PQ)^{1/4+\delta}\right\}\\
&\leqslant \exp\left\{-C_1\tau (PQ)^{1/4+\delta}\right\}
\end{align*}
for some  constant $C_1$. Therefore, from Lemma~\ref{lem:restate-bound}, for $(p, q)\in \mathcal C_\delta$, with probability larger than $1- (PQ)^{-3/4-3\delta} - \exp\{-C_1\tau (PQ)^{1/4+\delta}\}$ we have
\begin{equation}
\left|\|\mathcal R_{p, q}[P_\Omega \bm X]\|_S - \tau \|\mathcal R_{p, q}[\bm X]\|_S\right|\leqslant C_2 \sqrt{\tau}\cdot \sqrt{PQ}\|\bm X\|_{\max}\cdot PQ^{1/4}\cdot (pq\wedge p^*q^*)^{-1},\label{eq:proof-signal-bound-1}
\end{equation}
for some  constant $C_2$, where $\bm X = \lambda \bm A\otimes \bm B$ is the signal part. Noticing that $\|\bm X\|_{\max} = \lambda \|\bm A\|_{\max}\|\bm B\|_{\max}$ and $pq\wedge p^*q^*\geqslant (PQ)^{1/4+\delta}$, \eqref{eq:proof-signal-bound-1} can be further revised to
\begin{equation}
\left|\|\mathcal R_{p, q}[P_\Omega \bm X]\|_S - \tau \|\mathcal R_{p, q}[\bm X]\|_S\right|\leqslant C_2 \lambda\sqrt{\tau}\mu^2\cdot (PQ)^{-\delta}.\label{eq:proof-signal-bound-2}
\end{equation}
Similarly, for the noise part, according to Lemma~\ref{lem:restate-bound} and Lemma~\ref{lem:gaussian-max}, for $(p, q)\in\mathcal C_\delta$ with probability larger than $1 - (PQ)^{-3/4-3\delta} - (PQ)^{-3}$
\begin{equation}
\|\mathcal R_{p, q}[P_\Omega \bm E]\|_S \leqslant \tau \|\mathcal R_{p, q}[\bm E]\|_S + C_3\sqrt{\tau}\sqrt{\log PQ}\cdot (PQ)^{1/2-\delta},\label{eq:proof-noise-bound}
\end{equation}
for some  constant $C_3$. 

For the true configuration $(p_0, q_0)$, using \eqref{eq:proof-signal-bound-2} and \eqref{eq:proof-noise-bound}, we have
\begin{align*}
&\ \mathbb E[\|\mathcal R_{p_0, q_0}[P_\Omega \bm Y]\|_S]\\
&\geqslant \mathbb E[\|\mathcal R_{p_0, q_0}[P_\Omega \bm X]\|_S] - \sigma (PQ)^{-1/2}\mathbb E[\|\mathcal R_{p_0, q_0}[P_\Omega \bm E]\|_S]\\
&\geqslant \left(1- (PQ)^{-3/4-3\delta} - \exp\{-C_1\tau (PQ)^{1/4+\delta}\}\right) \left(\tau\lambda  - C_2\lambda \sqrt{\tau} \mu^2 (PQ)^{-\delta}\right)\\
&\ +  \left((PQ)^{-3/4-3\delta} + \exp\{-C_1\tau (PQ)^{1/4+\delta}\right)\cdot 0 \\
&\ - \sigma (PQ)^{-1/2}\left(1 - (PQ)^{-3/4-3\delta} - (PQ)^{-3}\right)\left(\tau (\sqrt{p_0q_0}+\sqrt{p_0^*q_0^*})+ C_3\sqrt{\tau}\sqrt{\log PQ}(PQ)^{1/2-\delta}\right) \\
&\ - \sigma (PQ)^{-1/2}\left( (PQ)^{-3/4-3\delta} + (PQ)^{-3}\right)\sqrt{\tau PQ}\\
&\geqslant \tau\lambda - O\left(\left(\lambda\mu^2 + \sigma \sqrt{\log PQ}\right)\sqrt{\tau}(PQ)^{-\delta}\right)
\end{align*}
Similarly,  for any $(p, q)\in\mathcal W_\delta$, we have
$$\mathbb E[\|\mathcal R_{p, q}[P_\Omega \bm Y]\|_S]\leqslant \tau\lambda\phi + O\left(\left(\lambda\mu^2 + \sigma \sqrt{\log PQ}\right)\sqrt{\tau}(PQ)^{-\delta}\right).$$
Therefore, 
\begin{multline*}
    \mathbb E[\|\mathcal R_{p_0, q_0}[P_\Omega \bm Y]\|_S] - \max_{(p, q)\in\mathcal W_{\delta}} \mathbb E[\|\mathcal R_{p, q}[P_\Omega \bm Y]\|_S]\\
    \geqslant \tau\lambda(1-\phi)\cdot\left(1 + O\left(\psi^{-2}\left(\mu^2 + \dfrac{\sigma}{\lambda} \sqrt{\log PQ}\right)\tau^{-1/2}(PQ)^{-\delta}\right)\right).
\end{multline*}
Here we use $\psi^2=1-\phi^2\asymp 1-\phi$. Assumption~\ref{assump:snr} ensures the term in big O notation is minor.

\section{Proof of Theorem~\ref{thm:consistency}}
For the true configuration $(p_0, q_0)$, using \eqref{eq:proof-signal-bound-2} and \eqref{eq:proof-noise-bound}, we have
\begin{align*}
&P[G(p_0, q_0) \leqslant \tau\lambda (1 + \phi)/2]\\
\leqslant & P[\|\mathcal R_{p_0, q_0}[\bm E]\|_S \geqslant \sqrt{p_0q_0}+\sqrt{p_0^*q_0^*}+R_1] + (PQ)^{-3/4-3\delta} + \exp\{-C_1\tau (PQ)^{1/4+\delta}\} + (PQ)^{-3}\\
\leqslant & \exp\{-R_1^2/2\}+ 2(PQ)^{-3/4-3\delta} + \exp\{-C_1\tau (PQ)^{1/4+\delta}\} + (PQ)^{-3}\\
\leqslant & C_4(PQ)^{-3/4-3\delta}
\end{align*}
for some  constant $C_4$,
where 
\begin{align*}
R_1&=(PQ)^{1/2}\left[\dfrac{\lambda(1-\phi)}{2\sigma}-(PQ)^{-\delta}\tau^{-1/2}\left(C_2\dfrac{\lambda}{\sigma}\mu^2 + C_3\sqrt{\log PQ}\right)\right]-\sqrt{p_0q_0}-\sqrt{p_0^*q_0^*}\\
&=O\left((PQ)^{1/2}\dfrac{\lambda(1-\phi)}{2\sigma}\right).
\end{align*}
Similarly, for a wrong configuration $(p, q)\in\mathcal W_\delta$, using \eqref{eq:proof-signal-bound-2} and \eqref{eq:proof-noise-bound}, we have
\begin{align*}
&P[G(p, q) \geqslant \tau\lambda (1 + \phi)/2]\\
\leqslant & P[\|\mathcal R_{p, q}[\bm E]\|_S \geqslant \sqrt{pq}+\sqrt{p^*q^*}+R_1] + (PQ)^{-3/4-3\delta} + \exp\{-C_1\tau (PQ)^{1/4+\delta}\} + (PQ)^{-3}\\
\leqslant & \exp\{-R_1^2/2\}+ 2(PQ)^{-3/4-3\delta} + \exp\{-C_1\tau (PQ)^{1/4+\delta}\} + (PQ)^{-3}\\
\leqslant & C_4(PQ)^{-3/4-3\delta}.
\end{align*}
Therefore, for any $(p, q)\in\mathcal W_\delta$,
$$P[G(p_0, q_0) \leqslant G(p, q)]\leqslant P[G(p_0, q_0) \leqslant \tau\lambda (1 + \phi)/2]+P[G(p, q) \geqslant \tau\lambda (1 + \phi)/2]\leqslant 2C_4(PQ)^{-3/4-3\delta}.$$
Using the union bound we have
$$P[G(p_0, q_0)\leqslant \max_{(p, q)\in\mathcal W_\delta} G(p, q)] \leqslant 2|\mathcal W_\delta|C_4(PQ)^{-3/4}-3\delta,$$
where $|\mathcal W_\delta|$ is the number of wrong configurations. Applying $|\mathcal W_\delta|< |d(P)||d(Q)|$ yields the theorem.

\section{Proof of Lemma~\ref{lem:recoverable}}
All entries of $P_\Omega\bm Y$ is equivalent to $\mathcal R_{p_0, q_0}[P_\Omega\bm Y]$ has no missing column or missing row.
Noticing that $\mathcal R_{p_0, q_0}[P_\Omega \bm Y]$ is a $p_0q_0\times p_0^*q_0^*$ matrix with entries observed IID with probability $\tau$, we have
$$P[\mathcal R_{p_0, q_0}[P_\Omega \bm Y]\text{ has no missing column}]\geqslant [1-(1-\tau)^{p_0q_0}]^{p_0^*q_0^*}\geqslant 1 - PQ\dfrac{(1-\tau)^{p_0q_0}}{p_0q_0}.$$
The right hand side is a increasing function of $p_0q_0$ and $\tau$. From Assumption~\ref{assump:true-config} we know $p_0q_0\geqslant (PQ)^{1/4+\delta}$ and Assumption~\ref{assump:observing-rate} gives $\tau > \log PQ\cdot (PQ)^{-2\delta}$. Therefore,
$$\dfrac{(1-\tau)^{p_0q_0}}{p_0q_0}=\exp\{p_0q_0\log (1-\tau) - \log p_0q_0\}\leqslant \exp\left\{-(PQ)^{1/4-\delta}\log PQ - \left(\dfrac{1}{4}+\delta\right)\log PQ\right\}.$$
Using a similar argument for missing row, the Lemma follows immediately. 

\section{Proof of Theorem~\ref{thm:recovery-error}}
We first restate the major result of \cite{gunasekar2013noisy} for rank-1 case in the following lemma.
\begin{lem}\label{lem:als-error}
Let $\bm M=\lambda uv^T$ be a rank-1 $m\times n$ ($m\leqslant n$), incoherent matrix with both $u$ and $v$ being $\mu$ incoherent. Furthermore, it is assumed that the noise matrix $\bm N$ satisfies $\|P_\Omega \bm N\|_S/\tau \leqslant C \lambda$ for some constant $C$. Additionally, let each entry of $\tilde {\bm M} = \bm M + \bm N$ be observed IID with probability 
$$\tau > C\dfrac{\mu^4\log n \log \frac{\lambda}{\epsilon}}{m},$$
where $C>0$ is a global constant. Then with high probability, the output matrix $\hat{\bm M}$ from Algorithm~\ref{alg:alternating-minimization} satisfies
$$\dfrac{1}{\sqrt{mn}}\|\bm M - \hat{\bm M}\|_F\leqslant \epsilon + 20\mu\dfrac{\|P_\Omega \bm N\|_S}{|\Omega|}.$$

\end{lem}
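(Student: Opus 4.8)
The plan is to treat Lemma~\ref{lem:als-error} as the rank-one specialization of the main theorem of \cite{gunasekar2013noisy} and to verify that their three-part analysis---spectral initialization, per-iteration contraction, and preservation of incoherence---collapses to the stated form when $r=1$. The simplification is substantial: a rank-one matrix has condition number one, so every factor of the rank and the condition number in their sample-complexity threshold disappears, leaving exactly $\tau\gtrsim \mu^4\log n\,\log(\lambda/\epsilon)/m$, and the ALS updates in Algorithm~\ref{alg:alternating-minimization} decouple column by column into scalar least-squares problems with closed-form solutions.

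First I would handle the initialization. Using an independent portion of the observed index set to form $\tau^{-1}P_\Omega\tilde{\bm M}$ and taking its leading left and right singular vectors as the warm start, I would bound the spectral perturbation $\|\tau^{-1}P_\Omega\bm M-\bm M\|_S$ by the incoherence-based concentration of the sampling operator (the same mechanism underlying Lemma~\ref{lem:restate-bound}) together with the assumed noise control $\|P_\Omega\bm N\|_S/\tau\leqslant C\lambda$. A Wedin/Davis--Kahan perturbation bound then guarantees that the angle between the initial left singular vector and the true $u$ is below a fixed small constant, which is the warm start required for the contraction phase.

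Next I would analyze one ALS sweep. Fixing the current estimate $\hat u$ (a unit vector, $\mu$-incoherent, with small angle to the true $u$), the update $\argmin_v\|P_\Omega(\tilde{\bm M}-\hat u v^T)\|_F$ decouples across columns; writing $\Omega_j$ for the set of observed rows in column $j$, its $j$-th entry is
\[
\hat v^{+}_j=\lambda v_j\,\frac{\sum_{i\in\Omega_j}\hat u_i u_i}{\sum_{i\in\Omega_j}\hat u_i^2}+\frac{\sum_{i\in\Omega_j}\hat u_i\bm N_{ij}}{\sum_{i\in\Omega_j}\hat u_i^2}.
\]
By Bernstein's inequality the denominator concentrates around $\tau\|\hat u\|^2=\tau$ and the signal cross-term around $\tau\langle\hat u,u\rangle$, so the component of $\hat v^{+}$ orthogonal to $v$ shrinks by a constant factor strictly below one, while the last term contributes the additive noise floor of order $\mu\|P_\Omega\bm N\|_S/|\Omega|$. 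The incoherence of $\hat u$ is precisely what keeps these deviations small and forces the $\mu^4$ and $\log n$ factors to appear in the sample-complexity threshold.

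Finally I would iterate and conclude. The crucial and most delicate step is to show that the renormalized iterate $\hat v^{+}/\|\hat v^{+}\|$ is again $\mu$-incoherent, so that the same concentration applies at the next sweep; I would establish this by the sample-splitting device of \cite{gunasekar2013noisy}, drawing fresh samples at each iteration to decouple the current iterate from the sampling randomness, combined with an entrywise bound on $\hat v^{+}$ read off from the closed form above. Granting incoherence preservation, the angle between iterate and truth contracts geometrically, so after $O(\log(\lambda/\epsilon))$ sweeps the signal portion of the error drops below $\epsilon$ and only the additive noise term survives; translating the resulting vector/angle bound into $(mn)^{-1/2}\|\bm M-\hat{\bm M}\|_F$ yields the stated inequality with the explicit constant. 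The main obstacle is exactly this incoherence-preservation argument together with pinning the contraction factor strictly below one at the claimed sample rate---the remainder is concentration and perturbation bookkeeping.
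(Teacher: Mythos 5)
The paper does not actually prove this lemma at all: it is imported verbatim from \citet{gunasekar2013noisy}, specialized to the rank-one case, which is exactly the framing you adopt. Your sketch of the internal three-phase analysis --- spectral initialization with a Wedin-type perturbation bound, per-column ALS updates contracting the component orthogonal to $v$ down to a noise floor of order $\mu\|P_\Omega \bm N\|_S/|\Omega|$, and incoherence preservation via the sample-splitting device --- is a faithful summary of that reference's actual argument, so your route and the paper's coincide.
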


Apply Lemma~\ref{lem:als-error} directly and we have with high probability
$$\|\bm X - \hat{\bm X}\|_F\leqslant \lambda\sqrt{PQ}\exp\left\{-C\dfrac{\tau (p_0q_0\wedge p_0^*q_0^*)}{\mu^4\log PQ}\right\}+20\mu \sigma \dfrac{\|\mathcal R_{p_0, q_0}[P_\Omega\bm E]\|_S}{|\Omega|}.$$
From Lemma~\ref{lem:restate-bound}, we have with high probability, 
$$\|\mathcal R_{p_0, q_0}[P_\Omega\bm E]\|_S \leqslant C_1\sqrt{\tau}\sqrt{\log PQ}\cdot (PQ)^{3/4}(p_0q_0\wedge p_0^*q_0^*)^{-1},$$
for some constant $C_1$, and with high probability
$$|\Omega| \geqslant  C_2 \tau PQ$$
for some constant $0<C_2<1$. 
Therefore, with high probability, 
$$\|\bm X - \hat{\bm X}\|_F\leqslant \lambda\sqrt{PQ}\exp\left\{-C\dfrac{\tau (p_0q_0\wedge p_0^*q_0^*)}{\mu^4\log PQ}\right\}+\dfrac{20C_1}{C_2}\mu \sigma\tau^{-1/2} \sqrt{\log PQ}\cdot (PQ)^{-1/4}(p_0q_0\wedge p_0^*q_0^*)^{-1}.$$
\end{document}